\newtheorem{theorem}{Theorem}[]
\newtheorem*{theorem*}{Theorem}
\newtheorem{lemma}[theorem]{Lemma}
\newtheorem{proposition}[theorem]{Proposition}
\newtheorem*{claim*}{Claim}
\theoremstyle{definition}
\newtheorem{definition}[theorem]{Definition}
\newtheorem*{definition*}{Definition}
\theoremstyle{AppDefinition}
\theoremstyle{AppClaim}
\theoremstyle{remark}
\newtheorem{remark}[theorem]{Remark}
\newtheorem*{example*}{Example}
\def\beginmat{ \left( \begin{array} }
\def\endmat{ \end{array} \right) }
\def\log{{\rm log}}
\newcommand*{\op}{%
  \DOTSB
  \mathop{\vphantom{\bigoplus}\mathpalette\matt@op\relax}%
  \slimits@
}
\newcommand\matt@op[2]{%
  \vcenter{\m@th\hbox{\resizebox{\widthof{$#1\bigoplus$}}{!}{$\boxplus$}}}%
}
\def\R{{\mathbb R}}
\newcommand{\pers}{\mathrm{pers}}
\newcommand{\bd}{\mathrm{bd}}
\def\@biblabel#1{}
\@citea\NAT@hyper@{%
     \NAT@nmfmt{\NAT@nm}%
     \hyper@natlinkbreak{\NAT@aysep\NAT@spacechar}{\@citeb\@extra@b@citeb}%
     \NAT@date}}
\@citea\NAT@nmfmt{\NAT@nm}%
\NAT@spacechar\NAT@hyper@{\NAT@date}}{}{}
\@citea\NAT@hyper@{%
     \NAT@nmfmt{\NAT@nm}%
     \hyper@natlinkbreak{\NAT@spacechar\NAT@@open\if*#1*\else#1\NAT@spacechar\fi}%
       {\@citeb\@extra@b@citeb}%
     \NAT@date}}
\@citea\NAT@nmfmt{\NAT@nm}%
\fi\NAT@hyper@{\NAT@date}}
\begin{document}
\def\spacingset#1{\renewcommand{\baselinestretch}%
{#1}\small\normalsize} \spacingset{1}
\begin{flushleft}
{\Large{\textbf{Topological Information Retrieval with Dilation-Invariant Bottleneck Comparative Measures}}}
\newline
\\
Yueqi Cao$^{1,*}$, Athanasios Vlontzos$^{2,*}$, Luca Schmidtke$^{2}$, Bernhard Kainz$^{2}$, and Anthea Monod$^{1,\dagger}$
\\
\bigskip
\bf{1} Department of Mathematics, Imperial College London, UK
\\
\bf{2} Department of Computing, Imperial College London, UK
\\
\bigskip
* These authors contributed equally to this work\\
$\dagger$ Corresponding e-mail: a.monod@imperial.ac.uk
\end{flushleft}


\section*{Abstract}

Appropriately representing elements in a database so that queries may be accurately matched is a central task in information retrieval; recently, this has been achieved by embedding the graphical structure of the database into a manifold in a hierarchy-preserving manner using a variety of metrics.  Persistent homology is a tool commonly used in topological data analysis that is able to rigorously characterize a database in terms of both its hierarchy and connectivity structure.  Computing persistent homology on a variety of embedded datasets reveals that some commonly used embeddings fail to preserve the connectivity.  We show that those embeddings which successfully retain the database topology coincide in persistent homology by introducing two dilation-invariant comparative measures to capture this effect: in particular, they address the issue of metric distortion on manifolds. We provide an algorithm for their computation that exhibits greatly reduced time complexity over existing methods.  We use these measures to perform the first instance of topology-based information retrieval and demonstrate its increased performance over the standard bottleneck distance for persistent homology.  We showcase our approach on databases of different data varieties including text, videos, and medical images.\\

\noindent
{\bf Keywords:} Bottleneck distance; Database embeddings; Dilation invariance; Information retrieval; Persistent homology.


\section{Introduction}
\label{sec:intro}

The fundamental problem of information retrieval (IR) is to find the most related elements in a database for a given query.  Given that queries often comprise multiple components, candidate matches must satisfy multiple conditions, which gives rise to a natural hierarchy and connectivity structure of the database.  This structure is important to maintain in performing IR.  For example, when searching for a person named ``John Smith" in a database, the search algorithm may search among all entries with the last name ``Smith" and then among those entries, search for those with the first name ``John."  Additionally, cycles and higher order topological features in the database correspond to entries that are directly related and may be, for instance, suitable alternative matches: as an example, in online shopping, when recommendations for other products are proposed as either alternative or complementary to the original query (i.e., they are recommended for their connection).  The hierarchical and connectivity structure of databases motivates the study of the {\em topology} of databases. Topology characterizes abstract geometric properties of a set or space, such as its connectivity.  Prior work has used point-set topology to describe databases \citep{EGGHE199861, CLEMENTINI1994815, https://doi.org/10.1002/(SICI)1097-4571(1998)49:13<1144::AID-ASI2>3.0.CO;2-Z, https://doi.org/10.1002/(SICI)1097-4571(199212)43:10<658::AID-ASI3>3.0.CO;2-H}.  In this paper, we explore an alternative approach based on algebraic topology.

Topological data analysis (TDA) has recently been utilized in many fields and yielded prominent results, including imaging \citep{SECT, perea2014klein}, biology and neuroscience \citep{10.1007/978-3-030-00755-3_8, aukerman_et_al:LIPIcs:2020:12169}, materials science \citep{Hiraoka201520877, hirata2020structural}, and sensor networks \citep{doi:10.1177/0278364914548051, doi:10.1177/0278364906072252}.  TDA has also recently gained interest in machine learning (ML) in many contexts, including loss function construction, generative adversarial networks, deep learning and deep neural networks, representation learning, kernel methods, and autoencoders \citep{Bruel-Gabrielsson2019, hofer2017deep, pmlr-v97-hofer19a, Hu2019, Moor2020, reininghaus2015stable}.


{\em Persistent homology} is a fundamental TDA methodology that extracts the topological features of a dataset in an interpretable, lower-dimensional representation \citep{892133, frosini1999size, zomorodian2005computing}.  Persistent homology is particularly amenable to data analysis since it produces a multi-scale summary, known as a {\em persistence diagram}, which tracks the presence and evolution of topological features.  Moreover, persistent homology is robust to noise and allows for customization of the chosen metric, making it applicable to a wide range of applications \citep{cohen2007stability}.

In this paper, we apply persistent homology to the setting of database queries and IR to understand the structure and connectivity of databases, which are possibly unknown, but may be summarized using topology.  Persistent homology, in particular, provides an intuition on the prominence of each topological feature within the dataset.  We show that the topological characteristics of a hierarchical and interdependent database structure are preserved in some commonly used embeddings and not in others. Moreover, we find that there is a high degree of similarity between topology-preserving embeddings.  To capture and quantify this similarity, we introduce the {\em dilation-invariant bottleneck dissimilarity} and {\em distance} for persistent homology to mitigate the issue of metric distortion in manifold embeddings.  We use these measures in an exploratory analysis as well as to perform an IR task on real databases, where we show increased performance over the standard bottleneck distance for persistent homology.  To the best of our knowledge, this is the first instance where TDA tools are used to describe database representations and to perform IR.

The remainder of this paper is structured as follows.  We close this first Section \ref{sec:intro} with a discussion on related work.  We then present the key ideas behind TDA in Section \ref{sec:tda} and introduce our dilation-invariant bottleneck comparative measures.  In Section \ref{sec:methods} we present efficient methods to compute both dilation-invariant bottleneck comparative measures and prove results on the correctness and complexity of our algorithms.  In particular, we show that the time complexity of our algorithm to compute the dilation-invariant bottleneck distance is vastly more efficient than existing work that computes the related shift-invariant bottleneck distance.  In Section \ref{sec:application}, we present two applications to data: an exploratory data analysis on database representations and an IR task by classification.  We conclude in Section \ref{sec:discussion} with some thoughts from our study and directions for future research.

\paragraph{Related Work.}

IR has been a topic of active research, particularly within the natural language processing (NLP) community. Document retrieval approaches have proposed a sequence-to-sequence model 
to increase performance in cross-lingual IR \citep{boudin-etal-2020-keyphrase,liu-etal-2020-cross-lingual-document}. A particularly interesting challenge in IR in NLP is word-sense disambiguation, where the closest meaning of a word given a context is retrieved. 
Approaches to this problem have been proposed where the power of large language models is leveraged to embed not only words but also their contexts; the embeddings are further enriched with other sources, such as the WordNet hierarchical structure, before comparing the queries to database elements \citep{bevilacqua-navigli-2020-breaking,scarlini-etal-2020-sensembert,yap-etal-2020-adapting}.  Inference is usually done either using cosine similarity as a distance function or through a learned network.

IR also arises in contexts other than NLP, such as image and video retrieval \citep{Long_2020_CVPR}.  Image and video retrieval challenges, such as those proposed by \citet{caba2015activitynet,kaggle}, have inspired the development of a wide variety of techniques.\\


\noindent{\em Topology-enabled IR.}
Prior work that adapts topology to IR proposes topological analysis systems in the setting of point-set topology: these models are theoretical, and use separation axioms in topological spaces to describe the restriction on topologies to define a threshold for retrieval \citep{EGGHE199861, CLEMENTINI1994815, https://doi.org/10.1002/(SICI)1097-4571(1998)49:13<1144::AID-ASI2>3.0.CO;2-Z, Everett1992, https://doi.org/10.1002/(SICI)1097-4571(199212)43:10<658::AID-ASI3>3.0.CO;2-H}.  A tool set based on point-set topology has also been proposed to improve the performance of enterprise-related document retrieval \citep{enterprise-topology}.  

More recently, and relevant to the task of IR, \cite{aloni2021joint} propose a method that jointly implements geometric and topological concepts---including persistent homology---to build a representation of hierarchical databases.  Our approach is in contrast to this work and other data representation or compression tasks, such as that by \cite{Moor2020}, in the sense that we do not seek to impose geometric or topological structure on a database nor do we explicitly enforce a topological constraint.  Rather, we seek to study its preconditioned hierarchical structure using persistent homology and use this information to perform IR.



\section{Topological Data Analysis}
\label{sec:tda}

TDA adapts algebraic topology to data.  Classical topology studies features of spaces that are invariant under smooth transformations (e.g., ``stretching" without ``tearing").  A prominent example for such features is $k$-dimensional holes, where dimension 0 corresponds to connected components, dimension 1 to cycles, dimension 2 to voids, and so on.  Homology is a theoretical concept that algebraically identifies and counts these features.  The crux of topological data analysis is that topology captures meaningful aspects of the data, while algebraicity lends interpretability and computational feasibility.

\subsection{Persistent Homology}

Persistent homology adapts homology to data and outputs a set of topological descriptors that summarizes topological information of the dataset.  Data may be very generally represented as a finite point cloud, which is a collection of points sampled from an unknown manifold, together with some similarity measure or metric. Point clouds may therefore be viewed as finite metric spaces.  Persistent homology assigns a continuous, parameterized sequence of nested skeletal structures to the point cloud according to a user-specified proximity rule.  The appearance and dissipation of holes in this sequence is tracked, providing a concise summary of the presence of homological features in the data at all resolutions.  We now briefly formalize these technicalities; a complete discussion with full details can be found in the literature on applied and computational topology (e.g., \citet{carlsson2009topology, ghrist2008barcodes, edelsbrunner2008persistent}).



Although the underlying idea of homology is intuitive, computing homology can be challenging.  One convenient workaround is to study a discretization of the topological space as a union of simpler building blocks assembled in a combinatorial manner.  When the building blocks are simplices (e.g., vertices, lines, triangles, and higher-dimensional facets), the skeletonized version of the space as a union of simplices is a simplicial complex, and the resulting homology theory is simplicial homology, for which there exist efficient computational algorithms (see, e.g., \citet{munkres2018elements} for a background reference on algebraic topology).  Thus, in this paper, we will use simplicial homology over a field to study a finite topological space $X$; $(X, d_X)$ is therefore a finite metric space.


A {\em $k$-simplex} is the convex hull of $k+1$ affinely independent points $x_0, x_1, \ldots, x_k$, denoted by $[x_0,\, x_1,\, \ldots,\, x_k]$; a set of $k$-simplices forms a simplicial complex $K$.  Simplicial homology is based on simplicial $k$-chains, which are linear combinations of $k$-simplices in finite $K$ over a field $\mathbb{F}$. A set of $k$-chains thus defines a vector space $C_k(K)$.  

\begin{definition}
The {\em boundary operator} $\partial_k: C_k(K) \to C_{k-1}(K)$ maps to lower dimensions of the vector spaces by sending simplices $[x_0, x_1, \ldots, x_k] \mapsto \sum_{i=0}^k (-1)^i[x_0, \ldots, \hat{x}_{i}, \ldots, x_k]$ with linear extension, where $\hat{x}_i$ indicates that the $i$th element is dropped.  $B_k(K) := \text{im~} \partial_{k+1}$ is the set of {\em boundaries}; $Z_k(K) := \text{ker~}\partial_{k}$ is the set of {\em cycles}.  The {\em $k$th homology group} of $K$ is the quotient group $H_k(K) := Z_k(K)/B_k(K)$.
\end{definition}

Homology documents the structure of $K$, which is a finite simplicial complex representation of a topological space $X$.  Simplicial complexes $K$ may be constructed according to various assembly rules, which give rise to different complex types.  In this paper, we work with the {\em Vietoris--Rips} (VR) complexes and filtrations.

\begin{definition}
\label{def:VRcomplex}
Let $(X, d_X)$ be a finite metric space, let $r \in \mathbb{R}_{\geq 0}$.  The {\em Vietoris--Rips complex} of $X$ is the simplicial complex with vertex set $X$ where $\{x_0,\, x_1,\, \ldots,\, x_k\}$ spans a $k$-simplex if and only if the diameter $d(x_i, x_j) \leq r$ for all $0 \leq i, j \leq k$.

A {\em filtration} of a finite simplicial complex $K$ is a sequence of nested subcomplexes $K_0 \subseteq K_1 \subseteq \cdots \subseteq K_t = K$; a simplicial complex $K$ that can be constructed from a filtration is a filtered simplicial complex.  In this paper, filtrations are indexed by a continuous parameter $r \in [0, t]$; setting the filtration parameter to be the diameter $r$ of a VR complex yields the {\em Vietoris--Rips filtration}.
\end{definition}

Persistent homology computes homology in a continuous manner when the simplicial complex evolves continuously over a filtration.
\begin{definition}
Let $K$ be a filtered simplicial complex.  The {\em $k$th persistence module derived in homology} of $K$ is the collection $\text{PH}_k(K) := \{ H_k(K_r) \}_{0 \leq r \leq t}$,
together with associated linear maps $\{ \varphi_{r,s} \}_{0 \leq r < s \leq t}$ where $\varphi_{r,s}:H_k(K_r) \rightarrow H_k(K_s)$ is induced by the inclusion $K_r \hookrightarrow K_s$ for all $r,s \in [0,t]$ where $r \leq s$.
\end{definition}

Persistent homology, therefore, contains information not only on the individual spaces $\{K_r\}$ but also on the mappings between every pair $K_r$ and $K_s$ where $r \leq s$.  Persistent homology keeps track of the continuously evolving homology of $X$ across all scales as topological features (captured by simplices) appear, evolve, and disappear as the filtered simplicial complex $K$ evolves with the filtration parameter $r$.  

Persistent homology outputs a collection of intervals where each interval represents a topological feature in the filtration; the left endpoint of the interval signifies when each feature appears (or is ``born"), the right endpoint signifies when the feature disappears or merges with another feature (``dies"), and the length of the interval corresponds to the feature's ``persistence."  Each interval may be represented as a set of ordered pairs and plotted as a persistence diagram.  A persistence diagram of a persistence module, therefore, is a multiset of points in $\mathbb{R}^2$.  Typically, points located close to the diagonal in a persistence diagram are interpreted as topological ``noise" while those located further away from the diagonal can be seen as ``signal."  The distance from a point on the persistence diagram to its projection on the diagonal is a measure of the topological feature's persistence.  Figure \ref{fig:ph_ex} provides an illustrative depiction of a VR filtration and persistence diagram.  In real data applications (and in this paper), persistence diagrams are finite.

\begin{figure*}
\begin{center}
\centerline{\includegraphics[width=\columnwidth]{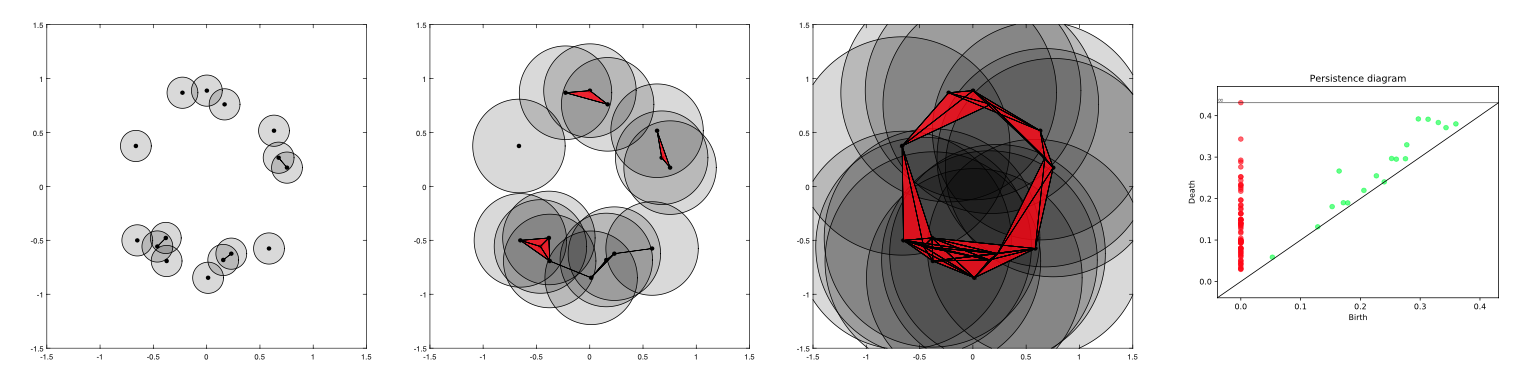}}
\caption{Illustration of a VR filtration and persistent homology.  For a sample of points, a closed metric ball is grown around each point.  The radius of the balls is the VR threshold, $r$.  Points are connected by an edge whenever two balls have a non-empty intersection; $k$-simplices are spanned for intersections of $k$ balls.  As $r$ grows, the collection of $k$-simplices evolves, yielding a VR filtration.  The topology of the VR filtration is tracked in terms of the homological features (connected components, cycles, voids) that appear (are ``born"), evolve, and disappear (``die") as $r$ grows, and documented in a persistence diagram (right-most).  In this figure, red points correspond to $H_0$ homology, or connected components, and green points correspond to $H_1$ homology, or cycles.}
\label{fig:ph_ex}
\end{center}
\end{figure*}

\subsection{Distances Between Persistence Diagrams}  
The set of all persistence diagrams can be endowed with various distance measures; under mild regularity conditions (of local finiteness of persistence diagrams), it is a rigorous metric space.  The {\em bottleneck distance} measures distances between two persistence diagrams as a minimal matching between the two diagrams, allowing points to be matched with the diagonal $\Delta$, which is the multiset of all $(x,x) \in \mathbb{R}^2$ with infinite multiplicity, or the set of all zero-length intervals.

\begin{definition}
Let $\mathcal{M}$ be the space of all finite metric spaces. Let ${\rm Dgm}(X,d_X)$ be the persistence diagram corresponding to the VR persistent homology of $(X,d_X)$.  Let $\mathcal{D}=\{{\rm Dgm}(X,d_X) \mid (X,d_X)\in\mathcal{M}\}$ be the space of persistence diagrams of VR filtrations for finite metric spaces in $\mathcal{M}$.  The {\em bottleneck distance} on $\mathcal{D}$ is given by
$$
d_{\infty}({\rm Dgm}(X,d_X),\, {\rm Dgm}(Y,d_Y))= \inf_\gamma\sup_{x\in{\rm Dgm}(X,d_X)}\|x-\gamma(x)\|_\infty,
$$
where $\gamma$ is a multi-bijective matching of points between ${\rm Dgm}(X, d_X)$ and ${\rm Dgm}(Y, d_Y)$. 
\end{definition}

The fundamental stability theorem in persistent homology asserts that small perturbations in input data will result in small perturbations in persistence diagrams, measured by the bottleneck distance \citep{cohen2007stability}.  This result together with its computational feasibility renders the bottleneck distance the canonical metric on the space of persistence diagrams for approximation studies in persistent homology.

Persistence diagrams, however, are not robust to the scaling of input data and metrics: the same point cloud measured by the same metric in different units results in different persistence diagrams with a potentially large bottleneck distance between them.  One way that this discrepancy has been previously addressed is to study filtered complexes on a log scale, such as in \citet{bobrowski2017, BUCHET201670, 10.1145/2535927}.   In these works, the persistence of a point in the persistence diagram (in terms of its distance to the diagonal) is based on the ratio of birth and death times, which alleviates the issue of artificial inflation in persistence arising from scaling.  Nevertheless, these approaches fail to recognize when two diagrams arising from the same input data measured by the same metric but in different units should coincide.  The {\em shift-invariant bottleneck distance} proposed by \cite{sheehy2018computing} addresses this issue by minimizing over all shifts along the diagonal, which provides scale invariance for persistence diagrams.  

A further challenge is to recognize when the persistent homology of an input dataset is computed with two different metrics and we would like to deduce that the topology the dataset is robust to the choice of metric.  Neither log-scale persistence nor the shift-invariant bottleneck distance is able to recognize when two persistence diagrams are computed from the same dataset but with different metrics.  We therefore introduce the dilation-invariant bottleneck dissimilarity which mitigates this effect of metric distortion on persistence diagrams.  The dilation-invariant bottleneck dissimilarity erases the effect of scaling between metrics used to construct the filtered simplicial complex and instead focuses on the topological structure of the point cloud.

\subsection{Dilation-Invariant Bottleneck Comparison Measures}



The dilation-invariant bottleneck comparison measures we propose are motivated by the {\em shift-invariant bottleneck distance} proposed by \cite{sheehy2018computing}, which we now briefly overview.  Let $\mathcal{F}=\{f:X\to \mathbb{R}\}$ be the space of continuous tame functions over a fixed triangulable topological space $X$, equipped with supremum metric $d_\mathcal{F}(f,g)=\sup_x |f(x)-g(x)|$. Define an isometric action of $\mathbb{R}$ on $\mathcal{F}$ by
\begin{align*}
	\mathbb{R}\times\mathcal{F}&\to\mathcal{F}\\
	(c,f)&\mapsto f+c.
\end{align*}
The resulting quotient space $\overline{\mathcal{F}}=\mathcal{F}/\mathbb{R}$ is a metric space with the quotient metric defined by
$$
	\overline{d_\mathcal{F}}([f],[g])=\inf_{c\in\mathbb{R}}\sup_{x\in X}|f(x)+c-g(x)|=\inf_{c\in\mathbb{R}}d_\mathcal{F}(f+c,g).
$$

Let $\mathcal{D}_\infty=\{{\rm Dgm}(f) \mid f\in\mathcal{F}\}$ be the space of persistence diagrams of functions in $\mathcal{F}$, equipped with the standard bottleneck distance. Similarly, we can define the action  
\begin{align*}
	\mathbb{R}\times\mathcal{D}_\infty&\to\mathcal{D}_\infty\\
	(c,{\rm Dgm}(f))&\mapsto {\rm Dgm}(f+c).
\end{align*}  
The resulting quotient space $\overline{\mathcal{D}_\infty}=\mathcal{D}/\mathbb{R}$ is a metric space with the quotient metric defined by	
	\begin{equation}
	\label{eq:SI_distance}
	\overline{d_{S}}([{\rm Dgm}(f)],\, [{\rm Dgm}(g)])=\inf_{c\in\mathbb{R}}d_{\infty}({\rm Dgm}(f+c),\, {\rm Dgm}(g)).
	\end{equation}
The quotient metric $\overline{d_S}$ is precisely the shift-invariant bottleneck distance: effectively, it ignores translations of persistence diagrams, meaning that two persistence diagrams computed from the same metric but measured in different units will measure a distance of zero with the shift-invariant bottleneck distance.

From this construction, we derive the dilation-invariant dissimilarity and distance in the following two ways.  In a first instance, we replace shifts of tame functions by dilations of finite metric spaces.  A critical difference between our setting and that of the shift-invariant bottleneck distance is that dilation is not an isometric action, thus the quotient space can only be equipped with a dissimilarity.  

In our second derivation, we convert dilations to shifts using the log map which gives us a proper distance function, however is valid only for positive elements, i.e., diagrams with no persistence points born at time 0.


Before we formalize the dilation-invariant bottleneck dissimilarity and distance, we first recall the Gromov--Hausdorff distance \citep[e.g.,][]{burago2001course,degregorio2020notion}.

\begin{definition}[Gromov--Hausdorff Distance]
\label{def:GH}
For two metric spaces $(X, d_X)$ and $(Y, d_Y)$, a {\em correspondence} between them is a set $R \subseteq X \times Y$ where $\pi_X(R) = X$ and $\pi_Y(R) = Y$ where $\pi_X$ and $\pi_Y$ are canonical projections of the product space; let $\mathcal{R}(X,Y)$ denote the set of all correspondences between $X$ and $Y$.  A {\em distortion} $\tilde{d}$ of a correspondence with respect to $d_X$ and $d_Y$ is
\begin{equation}
\label{eq:distortion}
\tilde{d}(R, d_X, d_Y) = \sup_{(x,y), (x',y') \in R} |d_X(x,x') - d_Y(y,y')|.
\end{equation}
The {\em Gromov--Hausdorff} distance between $(X, d_X)$ and $(Y, d_Y)$ is
\begin{equation}
\label{eq:GH}
d_{\mathrm{GH}}((X, d_X),\, (Y, d_Y)) = \frac{1}{2} \inf_{R \in \mathcal{R}(X,Y)} \tilde{d}(R, d_X, d_Y).
\end{equation}
\end{definition}

Intuitively speaking, the Gromov--Hausdorff distance measures how far two metric spaces are from being isometric.

\begin{remark}
In the case of finite metric spaces, such as in this paper, the supremum in \eqref{eq:distortion} is in fact a maximum and the infimum in \eqref{eq:GH} is a minimum.
\end{remark}

\subsubsection{The Dilation-Invariant Bottleneck Dissimilarity}

Let $(\mathcal{M},d_{\text{GH}})$ be the space of finite metric spaces, equipped with Gromov--Hausdorff distance $d_{\text{GH}}$.  Let $\mathbb{R}_+$ be the multiplicative group on positive real numbers.  Define the group action of $\mathbb{R}_+$ on $\mathcal{M}$ by 
\begin{equation}
\label{eq:GH}
\begin{aligned}
\mathbb{R}_+\times\mathcal{M}&\to\mathcal{M},\\
(c,(X,d_X))&\mapsto (X,c\cdot d_X).
\end{aligned}
\end{equation}  
Unlike the shift-invariant case, here, the group action is not isometric and thus we cannot obtain a quotient metric space.  Instead, we have an asymmetric dissimilarity defined by 
\begin{equation*}
\overline{d_\text{GH}}((X,d_X),\, (Y,d_Y))=\inf_{c\in\mathbb{R}_+}d_\text{GH}((X,c\cdot d_X),\, (Y,d_Y)).
\end{equation*}
Note that multiplying a constant to the distance function results in a dilation on the persistence diagram and that for VR persistence diagrams ${\rm Dgm}(X,d_X)$, a simplex is in the VR complex of $(X,d_X)$ with threshold $r$ if and only if it is in the VR complex of $(X,c\cdot d_X)$ with threshold $cr$.  Therefore, any point in the persistence diagram ${\rm Dgm}(X,c\cdot d_X)$ is of the form $(cu,cv)$ for some $(u,v)\in{\rm Dgm}(X,d_X)$.  This then gives the following definition.

\begin{definition}
\label{def:DI-dissimilarity}
For two persistence diagrams $\mathrm{Dgm}(X, d_X), \mathrm{Dgm}(Y, d_Y)$ corresponding to the VR persistent homology of finite metric spaces $(X, d_X)$ and $(Y, d_Y)$, respectively, and a constant $c$ in the multiplicative group on positive real numbers $\R_+$, the {\em dilation-invariant bottleneck dissimilarity} is given by 
\begin{equation}\label{eq:DI-dissimilarity}
\overline{d_{D}}({\rm Dgm}(X,d_X),\, {\rm Dgm}(Y,d_Y))=
\inf_{c\in\mathbb{R}_+}d_{\infty}({\rm Dgm}(X,c \cdot d_X),\, {\rm Dgm}(Y,d_Y)).
\end{equation}
The {\em optimal dilation} is the positive number $c^* \geq 0$ such that $\overline{d_D}({\rm Dgm}(X,d_X),\, {\rm Dgm}(Y,d_Y)) = d_\infty(c^* \cdot {\rm Dgm}(X,d_X),\, {\rm Dgm}(Y,d_Y))$.
\end{definition}

As we will see later on in Section \ref{sec:methods} when we discuss computational aspects, the optimal dilation is always achievable and the infimum in \eqref{eq:DI-dissimilarity} can be replaced by the minimum. This is possible if we assume the value of $c=0$ falls within the search domain, which would amount to $Y$ being a single-point space, which will be further discussed in this section. Assuming this replacement of the minimum, we now summarize properties of $\overline{d_{D}}$.

\begin{proposition}\label{prop:DI-dissimilarity}
Let $A$ and $B$ be two persistence diagrams.  The dilation-invariant bottleneck dissimilarity (\ref{eq:DI-dissimilarity}) satisfies the following properties.
\begin{enumerate}
\item {\em Positivity:} $\overline{d_{D}}(A,B)\ge 0$, with equality if and only if $B$ is proportional to $A$, i.e., $B=c^*A$ for the optimal dilation $c^*$.  The dissimilarity correctly identifies persistence diagrams under the action of dilation.

\item {\em Asymmetry:} The following inequality holds:
\begin{equation}\label{eq:asymmetry}
\overline{d_{D}}(A, B)\ge c^* \cdot \overline{d_{D}}(B,A).
\end{equation}

\item {\em Dilation Invariance:} For any $c>0$, $\overline{d_{D}}(cA,B)=\overline{d_{D}}(A,B)$. For a dilation on the second variable, 
    	\begin{equation}\label{eq:second-variable}
    	 \overline{d_{D}}(A,cB)=c \cdot \overline{d_{D}}(A,B).
    	\end{equation}

\item {\em Boundedness:} Let $D_0$ be the empty diagram; i.e., the persistence diagram with only the diagonal of points with infinite multiplicity $\Delta$, and no persistence points away from $\Delta$. Then
    \begin{equation}
    \overline{d_{D}}(A,B)\le \min\{d_{\infty}(A,B),\, d_{\infty}(D_0, B)\}.
    \end{equation}
\end{enumerate}
\end{proposition}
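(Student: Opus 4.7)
The plan is to derive all four properties from a single basic scaling identity for the bottleneck distance: $d_\infty(cA, cB) = c \cdot d_\infty(A,B)$ for every $c > 0$. This follows immediately because any matching $\gamma$ between $A$ and $B$ extends to a matching between $cA$ and $cB$ with each matched pair's $L^\infty$-distance scaled by $c$. Combined with the attainment of the infimum in \eqref{eq:DI-dissimilarity} at some $c^* \ge 0$ (justified in Section \ref{sec:methods}), this lemma does most of the work, reducing each part to either a reparametrization of the infimum or a specific choice of $c$.

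I would handle positivity, dilation invariance, and boundedness first, since they are essentially bookkeeping. Non-negativity is inherited from $d_\infty$; for the equality case in positivity, the ``if'' direction is immediate by setting $c = c^*$, and the ``only if'' direction uses that $d_\infty$ separates persistence diagrams, so $d_\infty(c^* A, B) = 0$ forces $B = c^* A$ as multisets. For dilation invariance in the first argument, I would make the substitution $c'' = c c'$ inside the infimum over $\mathbb{R}_+$; for the second argument, I would pull the factor of $c$ outside $d_\infty$ using the scaling identity and again reparametrize. For boundedness, the first bound follows from evaluating the infimum at $c = 1$, and the second from evaluating at $c = 0$ (admissible per the remark following the definition), which collapses $cA$ onto the diagonal $\Delta$, i.e., onto $D_0$.

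The main obstacle will be the asymmetry inequality \eqref{eq:asymmetry}, because $\overline{d_D}$ is genuinely asymmetric and the optimal dilations for $(A,B)$ versus $(B,A)$ are not obviously related. My plan is to rewrite
\[
\overline{d_D}(A,B) \;=\; d_\infty(c^* A, B) \;=\; c^* \cdot d_\infty\!\left(A, \tfrac{1}{c^*} B\right) \;=\; c^* \cdot d_\infty\!\left(\tfrac{1}{c^*} B, A\right),
\]
where the middle equality uses the scaling lemma and the last uses symmetry of $d_\infty$. Since $1/c^*$ is merely one admissible dilation of $B$, not necessarily the optimal one for the reversed problem, the rightmost quantity is bounded below by $c^* \cdot \inf_{c \in \mathbb{R}_+} d_\infty(cB, A) = c^* \cdot \overline{d_D}(B,A)$, giving \eqref{eq:asymmetry}. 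The subtlety I will need to handle carefully is the case $c^* = 0$, where the manipulation $1/c^*$ is invalid; in that case $\overline{d_D}(A,B) = d_\infty(D_0, B)$, and the inequality \eqref{eq:asymmetry} holds trivially since the right-hand side vanishes.
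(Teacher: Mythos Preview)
Your proposal is correct and follows essentially the same line as the paper's proof. The only organizational difference is that you isolate the scaling identity $d_\infty(cA,cB)=c\,d_\infty(A,B)$ once and reuse it, whereas the paper reproves that fact implicitly at each step by constructing explicit matchings (e.g., $ca\mapsto c\gamma(a)$ for dilation invariance, and $\gamma^{-1}$ rescaled for asymmetry); your packaging is slightly cleaner but the content is identical.
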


\begin{proof}
To show positivity, the ``only if'' direction holds due to the fact that the optimal dilation is achievable. Thus, $\overline{d_D}(A, B)=d_\infty(c^*A,B)=0$ for $c^* \geq 0$ and $B=c^*A$ since $d_\infty$ is a distance function.

To show asymmetry, we consider the bijective matching $\gamma:c^* A\to B$. If $c^*=0$, the inequality holds by positivity. If $c^*>0$, the inverse $\gamma^{-1}$ yields a matching from $\displaystyle \frac{1}{c^*}B\to A$ with matching value $\displaystyle \frac{1}{c^*} \cdot \overline{d_{D}}(A, B)$. Therefore $\displaystyle c^*\overline{d_{D}}(B, A)\le \overline{d_{D}}(A,B)$.

To show dilation invariance, notice that by Definition \ref{def:DI-dissimilarity}, $\overline{d_D}$ will absorb any coefficient on  the first variable. It therefore suffices to prove that \eqref{eq:second-variable} holds on the second variable. Let $c^*$ be the dilation such that $\overline{d_{D}}(A, B)=d_{\infty}(c^* A,B)$, and let $\gamma:c^* A\to B$ be the optimal matching. Then for any other $c>0$, $ca\mapsto c\gamma(a)$ is also a bijective matching from $c\cdot c^* A\to cB$. This gives $\overline{d_{D}}(A, cB)\le c\cdot \overline{d_{D}}(A,B)$. Conversely, if there is a dilation $c'$ and a bijective matching $\gamma':c' A\to cB$ such that the matching value is less than $c\cdot \overline{d_{D}}(A, B)$, then $\displaystyle \frac{c'}{c}a\mapsto \frac{1}{c}\gamma'(c'a)$ gives a bijective matching from $\displaystyle \frac{c'}{c} A\to B$ with matching value less than $\overline{d_{D}}(A, B)$ which contradicts the selection of $c^*$. Therefore, $\overline{d_{D}}(A, cB)=c\cdot \overline{d_{D}}(A, B)$.
    
Finally, to show boundedness, note that the function $\Theta(c)=d_\infty(cA,B)$ is continuous on $\mathbb{R}_+$, if we take a sequence of dilations approaching 0, then $\overline{d_{D}}(A, B)\le d_{\infty}(B,D_0)$.  If $c=1$, then $\overline{d_{D}}(A, B)=d_\infty(A,B)$.  Therefore, $\overline{d_{D}}(A, B)\le \min\{d_\infty(A,B),\, d_\infty(D_0, B)\}$.
\end{proof}

\begin{figure}[htbp]
	\centering
	\subfigure[]{
	\includegraphics[width=0.3\linewidth]{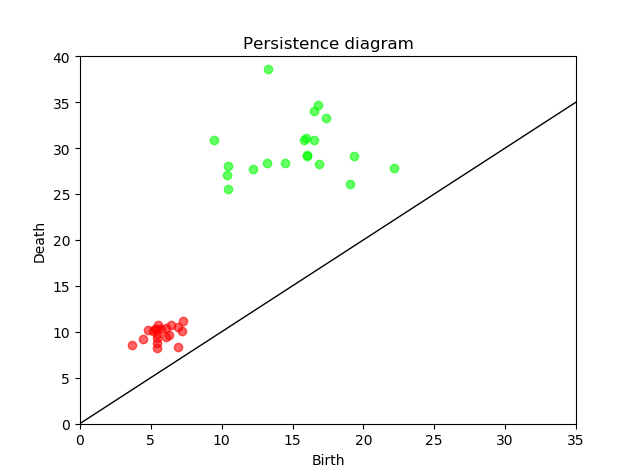}}
	\subfigure[]{
	\includegraphics[width=0.3\linewidth]{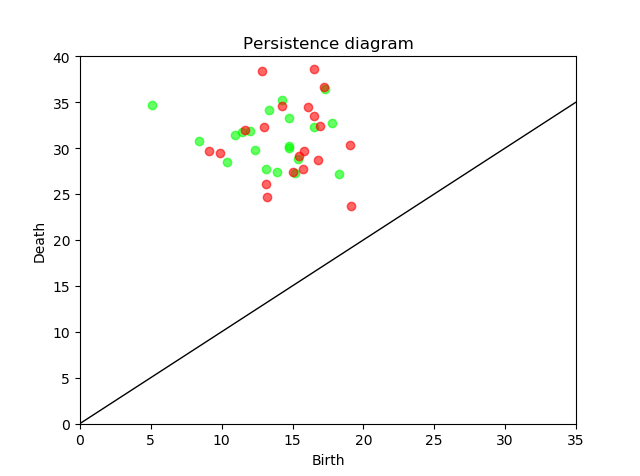}}
	\subfigure[]{
	\includegraphics[width=0.3\linewidth]{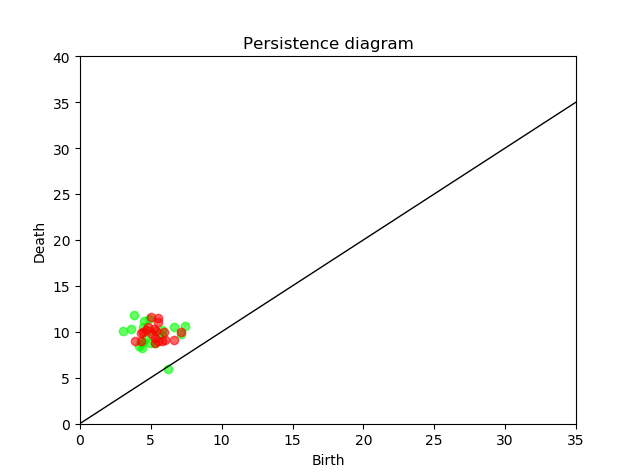}}
	\caption{A synthetic example to illustrate the asymmetry of $\overline{d_{D}}$: (a) Two synthetic persistence diagrams: red and green; (b) Scaling the green diagram to obtain $\overline{d_{D}}(\text{green},\, \text{red})$; (c) Scaling the red diagram to obtain $\overline{d_{D}}(\text{red},\, \text{green})$. The value of $\overline{d_D}$ depends on the scale of the second variable: as seen here, $\overline{d_D}(\text{green},\text{red})$ is much smaller than $\overline{d_D}(\text{red},\text{green})$.}  
	\label{fig:asymmetry}
\end{figure}

\begin{remark}
A simple illustration of the asymmetry of $\overline{d_D}$ is the following: let $A=D_0$ be the empty persistence diagram and $B$ be any nonempty diagram. Then $\overline{d_D}(A, B)= d_\infty(D_0, B)$ but $\overline{d_D}(B, A)=0$. In this example, equality of \eqref{eq:asymmetry} cannot be achieved, so it is a strict inequality.

We also give an illustrative example of asymmetry of $\overline{d_D}$ in Figure \ref{fig:asymmetry}.
\end{remark}

Finally, as a direct corollary of the fundamental stability theorem in persistent homology \citep{cohen2007stability,https://doi.org/10.1111/j.1467-8659.2009.01516.x}, we have the following stability result for the dilation-invariant bottleneck dissimilarity.
\begin{proposition}[Dilation-Invariant Stability]
Given two finite metric spaces $(X,d_X)$ and $(Y,d_Y)$, we have that 
\begin{equation}
\label{eq:dissimilarity-stability}
			\overline{d_{D}}( \mathrm{Dgm}(X,d_X),\,  \mathrm{Dgm}(Y,d_Y) )\le 2\overline{d_{\mathrm{GH}}}((X,d_X),\, (Y,d_Y))
\end{equation}
where $\mathrm{Dgm}$ denotes the persistence diagram of the VR filtration of the two metric spaces.
\end{proposition}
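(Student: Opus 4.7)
The plan is to derive this as a direct consequence of the classical Gromov--Hausdorff stability theorem for VR persistent homology, which asserts that for any two finite metric spaces $(X',d_{X'})$ and $(Y,d_Y)$,
\[
d_\infty(\mathrm{Dgm}(X',d_{X'}),\, \mathrm{Dgm}(Y,d_Y)) \le 2\, d_{\mathrm{GH}}((X',d_{X'}),\, (Y,d_Y)).
\]
I would cite this as the established stability result in the literature and then only need to propagate it through the quotient construction by $\mathbb{R}_+$.

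First, fix an arbitrary dilation $c \in \mathbb{R}_+$ and apply classical stability to the pair $(X, c\cdot d_X)$ and $(Y, d_Y)$, which are both finite metric spaces. This gives
\[
d_\infty(\mathrm{Dgm}(X, c\cdot d_X),\, \mathrm{Dgm}(Y,d_Y)) \le 2\, d_{\mathrm{GH}}((X, c\cdot d_X),\, (Y,d_Y)).
\]
Next, I would take the infimum over all $c \in \mathbb{R}_+$ on both sides of this inequality. By the definition of $\overline{d_D}$ in \eqref{eq:DI-dissimilarity}, the infimum on the left-hand side equals $\overline{d_D}(\mathrm{Dgm}(X,d_X),\, \mathrm{Dgm}(Y,d_Y))$. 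By the definition of the dilation-invariant Gromov--Hausdorff dissimilarity $\overline{d_{\mathrm{GH}}}$ introduced earlier in this section, the infimum on the right-hand side equals $2\,\overline{d_{\mathrm{GH}}}((X,d_X),\, (Y,d_Y))$, since the factor of $2$ is independent of $c$ and can be pulled outside the infimum. This yields the claimed inequality.

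There is no real obstacle in the argument: the only point requiring care is consistency of the quotient actions used to define $\overline{d_D}$ on the diagram side and $\overline{d_{\mathrm{GH}}}$ on the metric-space side. The group action $(c,(X,d_X)) \mapsto (X, c\cdot d_X)$ used in \eqref{eq:GH} corresponds, via VR persistent homology, precisely to rescaling every birth--death pair by $c$, so applying classical stability to the rescaled metric space produces exactly the quantity appearing inside the infimum on each side. Because the inequality holds uniformly in $c$, passing to the infimum preserves it, completing the proof.
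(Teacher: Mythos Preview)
Your proposal is correct and follows essentially the same approach as the paper: apply the classical Gromov--Hausdorff stability theorem for VR persistence to the rescaled space $(X, c\cdot d_X)$ and $(Y,d_Y)$, then take the infimum over $c>0$ on both sides. The paper's proof is in fact terser than yours, omitting the discussion of consistency of the group actions, but the argument is identical.
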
 
\begin{proof}
	In \cite{https://doi.org/10.1111/j.1467-8659.2009.01516.x} the stability theorem states that for any $c>0$,
		\begin{equation}\label{eq:gh-stability}
			d_\infty(\mathrm{Dgm}(X,cd_X),\, \mathrm{Dgm}(Y,d_Y))\le 2d_{\mathrm{GH}}((X,cd_X),(Y,d_Y))
		\end{equation}
	Taking the infimum of both sides of \eqref{eq:gh-stability} gives the result, \eqref{eq:dissimilarity-stability}. 
\end{proof}

\subsubsection{The Dilation-Invariant Bottleneck Distance}

We now study a construction that gives rise to a distance function.  Here, we consider persistence diagrams where all points have positive coordinates. 

\begin{definition}
Consider the equivalence classes defined by dilation given as follows: for persistence diagrams $A$ and $A'$, we set $A\sim A'$ if and only if $A=cA'$ for some positive constant $c$. Let $[A]$ and $[B]$ be two equivalence classes. The {\em dilation-invariant bottleneck distance} is given by 
\begin{equation}
\label{eq:DI_distance}
	D_S([A],[B])=\overline{d_S}(\log(A),\, \log(B)),
\end{equation}
where $\overline{d_S}$ is the quotient metric \eqref{eq:SI_distance}.
\end{definition}

\begin{proposition}
	The function $D_S$ given by \eqref{eq:DI_distance} is a well-defined metric on the equivalence classes defined by dilation.
\end{proposition}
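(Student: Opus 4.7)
The strategy is to exploit the coordinatewise logarithm $A\mapsto \log A$, defined on a diagram with positive coordinates by $(u,v)\mapsto(\log u,\log v)$, to convert the multiplicative dilation action of $\mathbb{R}_+$ into the additive shift action of $\mathbb{R}$. Since $\overline{d_S}$ has already been shown to be a metric on the shift-quotient space by \cite{sheehy2018computing}, all the required properties of $D_S$ will then transfer by pullback, and no bottleneck matching calculations will be needed.

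First I would check that the log is well-defined on the class of diagrams under consideration: since all points have strictly positive coordinates (so $u>0$, whence $v\geq u>0$), the coordinatewise logarithm is defined everywhere on the diagram, and it sends the diagonal $\Delta$ bijectively to itself because $\log(x,x)=(\log x,\log x)$. Second, I would verify well-definedness of $D_S$ on dilation-equivalence classes: if $A'=cA$ for some $c>0$, then $\log A'=\log A+\log c$ coordinatewise, so $\log A$ and $\log A'$ differ by a constant shift and therefore represent the same class in the shift-quotient. Hence $\overline{d_S}(\log A',\log B)=\overline{d_S}(\log A,\log B)$, and by the symmetric argument in the second variable, $D_S([A],[B])$ does not depend on the choice of representatives.

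Finally, for the metric axioms I would note that $c\mapsto\log c$ is a group isomorphism $(\mathbb{R}_+,\cdot)\to(\mathbb{R},+)$, which promotes the map $[A]\mapsto[\log A]$ to a bijection from the dilation-quotient space onto its image in the shift-quotient space. Since $D_S$ is then the pullback of the metric $\overline{d_S}$ under this bijection, non-negativity, symmetry, identity of indiscernibles, and the triangle inequality all follow at once from the corresponding properties of $\overline{d_S}$. The main subtlety, rather than a genuine obstacle, is simply keeping straight that $D_S$ inherits full metric status from the log--shift correspondence, in contrast to the directly-defined dissimilarity $\overline{d_D}$ of the previous subsection, which was only asymmetric precisely because the underlying dilation action is not isometric on the Gromov--Hausdorff side.
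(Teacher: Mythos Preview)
Your proposal is correct and follows essentially the same route as the paper: both arguments use the key identity $\log(cA)=\log A+\log c$ to reduce well-definedness to shift-invariance of $\overline{d_S}$, and then inherit symmetry, the triangle inequality, and identity of indiscernibles directly from the fact that $\overline{d_S}$ is a metric. Your framing as a pullback through the bijection $[A]\mapsto[\log A]$ is a slightly more structured way of phrasing what the paper checks axiom by axiom, but the content is the same.
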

\begin{proof}
	If $A'=cA$ is another representative in the class $[A]$, then $\log(A')=\log(A)+\log(c)$ and $\overline{d_S}(\log(A)+\log(c),\log(B))=\overline{d_S}(\log(A),\log(B))$. By symmetry it also holds for $[B]$, so $D_S$ is well-defined.
	
    We always have that $D_S([A],[B])\ge 0$ and when $\log(A)$ and $\log(B)$ differ by a shift, equality is achieved: this is equivalent to $A$ and $B$ differing by a dilation. Symmetry and the triangle inequality are consequences of $\overline{d_S}$ being a proper metric. Thus $D_S$ is a well-defined distance function, as desired.
\end{proof}

Both dilation-invariant bottleneck comparative measures are useful tools in machine learning tasks; the choice, however, of whether to use the dilation-invariant bottleneck distance or dissimilarity depends on the specific task at hand.  The dilation-invariant bottleneck distance may be more appropriate when a distance matrix is required to perform tasks such as dimension reduction, clustering, or classification. In practice, however, points with small birth/death time on persistence diagrams will need to be cropped to avoid large negative numbers tending towards infinity due to the log map. The dilation-invariant bottleneck dissimilarity, although not a distance, is more convenient to compute which is especially relevant for IR tasks, when different queries to the same database need to be compared.  Here, the dissimilarity function is more appropriate as it compares all data on the same scale.

\subsection{A Connection to Weak Isometry}
\label{sec:weak_isom}

Recent work by \cite{degregorio2020notion} defines a general equivalence relation of weak isometry as follows.  Let $(X,d_X)$ and $(Y,d_Y)$ be two finite metric spaces; they are said to be {\em weakly isometric}, $(X,d_X)=^{w}(Y,d_Y)$, if and only if there exists a bijection $\phi:X\to Y$ and a strictly increasing function $\psi:\mathbb{R}_+\to\mathbb{R}_+$ such that
$$
		\psi(d_X(x_1,x_2)) = d_Y(\phi(x_1),\, \phi(x_2)).
$$
Weak isometry thus considers all possible rescaling functions $\psi$, which includes dilations, given by 
$$
	\begin{aligned}
			\psi_c:\mathbb{R}_+&\to\mathbb{R}_+,\\
			t&\mapsto ct.
	\end{aligned}
$$
	 
For two finite metric spaces, the following function is a dissimilarity \citep{degregorio2020notion}:
	 \begin{equation}\label{eq:symmetry}
	 \begin{aligned}
	 	 	d_w((X,d_X),(Y,d_Y)) = &{} \inf_{\psi\in\mathcal{I}}d_{\mathrm{GH}}((X,\psi\circ d_X),\, (Y,d_Y)) +\inf_{\psi\in\mathcal{I}}d_{\mathrm{GH}}((X,d_X),\, (Y,\psi\circ d_Y)),
	 \end{aligned}
	 \end{equation}
where $\mathcal{I}$ is the set of all strictly increasing functions over $\mathbb{R}_+$. By definition $\tilde{d}$ is symmetric, which is critical in the proof of the following result \citep[Proposition 3]{degregorio2020notion}: 
	 \begin{equation}\label{eq:prop_symmetry}
	 	d_w((X,d_X),(Y,d_Y))=0\iff (X,d_X)=^{w}(Y,d_Y).
	 \end{equation} 
In particular, let $\psi_X=\psi\circ d_X$ and $\psi_Y=\psi\circ d_Y$. In the construction of $\psi_X$, the inverse of $\psi_Y$ is needed to show that $\psi_X$ is invertible and thus to extend $\psi_X$ to the whole of $\mathbb{R}_+$. If either of two terms in the sum of $d_w$ given by \eqref{eq:symmetry} is dropped, the relation \eqref{eq:prop_symmetry} does not hold.	
	 	
Notice that a straightforward symmetrization of the dilation-invariant bottleneck dissimilarity may be obtained by
\begin{equation}
\label{eq:DI-symmetric}
		\overline{{d}_{\mathrm{sym}}}(A,B)=\frac{\overline{d_D}(A,B)+\overline{d_D}(B,A)}{2}
\end{equation}
This symmetrization now resembles the dissimilarity \eqref{eq:symmetry} proposed by \cite{degregorio2020notion}.  However, a symmetric dissimilarity may not be practical or relevant in IR: by the dilation invariance property in Proposition \ref{prop:DI-dissimilarity}, we may end up with different scales between the query and database, leading to an uninterpretable conclusion.  An illustration of this effect is given later on with real data in Section \ref{sec:sym}.

We therefore only need one component in \eqref{eq:symmetry}, which then gives a form parallel to the dilation-invariant bottleneck dissimilarity \eqref{eq:DI-dissimilarity}. Fortunately, the conclusion \eqref{eq:prop_symmetry} still holds for the dilation-invariant bottleneck dissimilarity, even if we lose symmetry.
 
 \begin{proposition}
 		Given two finite metric spaces $(X,d_X)$ and $(Y,d_Y)$
$$
 			\overline{d_{\mathrm{GH}}}((X,d_X),\, (Y,d_Y)) = 0 
$$
 		if and only if there exists $c^*>0$ such that $(X,c^*d_X)$ is isometric to $(Y,d_Y)$, or $c^*=0$ and $(Y,d_Y)$ is a one-point space.
 \end{proposition}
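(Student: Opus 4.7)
The plan is to unfold the definition
$\overline{d_{\mathrm{GH}}}((X,d_X),(Y,d_Y)) = \inf_{c\in\mathbb{R}_+} d_{\mathrm{GH}}((X,c\cdot d_X),(Y,d_Y))$
and analyse the behaviour of a minimising sequence of dilations. Two elementary ingredients drive both implications. First, the Lipschitz estimate $d_{\mathrm{GH}}((X,c d_X),(X,c' d_X))\le \tfrac{1}{2}|c-c'|\,\mathrm{diam}(X,d_X)$ together with the triangle inequality for $d_{\mathrm{GH}}$ shows that $c\mapsto d_{\mathrm{GH}}((X,cd_X),(Y,d_Y))$ is continuous on $\mathbb{R}_+$. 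Second, the standard diameter lower bound
\[ d_{\mathrm{GH}}((X,cd_X),(Y,d_Y))\ge \tfrac{1}{2}\bigl|c\cdot\mathrm{diam}(X,d_X)-\mathrm{diam}(Y,d_Y)\bigr| \]
constrains how such a minimising sequence can behave at the boundaries $c\to 0^+$ and $c\to\infty$.

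For the ($\Leftarrow$) direction, if $c^*>0$ and $(X,c^*d_X)$ is isometric to $(Y,d_Y)$, then $d_{\mathrm{GH}}$ vanishes at $c=c^*$ and hence so does the infimum. In the degenerate case $Y=\{y\}$, the only correspondence forces $d_{\mathrm{GH}}((X,cd_X),(Y,d_Y))=\tfrac{c}{2}\mathrm{diam}(X,d_X)\to 0$ as $c\to 0^+$, so the infimum is again $0$.

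For the ($\Rightarrow$) direction, I would take a sequence $c_n\in\mathbb{R}_+$ with $d_{\mathrm{GH}}((X,c_n d_X),(Y,d_Y))\to 0$ and use the diameter bound to pin down its limiting behaviour. If $X$ has at least two points, then $c_n$ cannot escape to $\infty$, for otherwise the right-hand side blows up. If $Y$ has at least two points, then $c_n$ cannot collapse to $0$, for otherwise the right-hand side tends to $d_{\mathrm{GH}}(\{\mathrm{pt}\},(Y,d_Y))=\tfrac{1}{2}\mathrm{diam}(Y,d_Y)>0$. In the generic case where both spaces are non-trivial, a subsequence of $c_n$ therefore converges to some $c^*\in(0,\infty)$; the continuity above yields $d_{\mathrm{GH}}((X,c^* d_X),(Y,d_Y))=0$, and since both metric spaces are finite this upgrades to an actual isometry. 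If $Y$ is a single point we are in the advertised $c^*=0$ regime; the residual case where $X$ is a single point but $Y$ is not is ruled out, since then $d_{\mathrm{GH}}((X,cd_X),(Y,d_Y))=\tfrac{1}{2}\mathrm{diam}(Y,d_Y)>0$ for every $c$, contradicting the vanishing of the infimum.

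The main obstacle is precisely the boundary behaviour at $c=0$: the map $c\mapsto d_{\mathrm{GH}}((X,cd_X),(Y,d_Y))$ is only continuous on $\mathbb{R}_+=(0,\infty)$, and along a sequence $c_n\to 0$ its limit jumps to the positive value $\tfrac{1}{2}\mathrm{diam}(Y,d_Y)$ whenever $Y$ does not already collapse. A naive ``take a limit in the minimising sequence'' argument therefore fails, and one must instead use the diameter lower bound to exclude $c_n\to 0$ and $c_n\to\infty$ in the non-degenerate case before extracting a convergent subsequence in $(0,\infty)$, and then invoke the finiteness of both spaces to pass from $d_{\mathrm{GH}}=0$ to a genuine isometry.
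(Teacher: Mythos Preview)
Your argument is correct, and it reaches the same conclusion as the paper, but the mechanism for forcing a minimising sequence $(c_n)$ to converge is genuinely different.

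The paper does \emph{not} use a continuity/compactness argument on $c\mapsto d_{\mathrm{GH}}((X,cd_X),(Y,d_Y))$. Instead it exploits that for finite $X$ and $Y$ the set $\mathcal{R}(X,Y)$ of correspondences is itself finite: along the minimising sequence one may pass to a subsequence on which a single correspondence $R_0$ is used, and then for every fixed pair $(x,y),(x',y')\in R_0$ the relation $|c_{n_k}d_X(x,x')-d_Y(y,y')|\to 0$ forces $c_{n_k}$ to converge to some $c^*\ge 0$ (pick a pair with $d_X(x,x')>0$; if none exists, $Y$ collapses). The limit then gives $d_{\mathrm{GH}}((X,c^*d_X),(Y,d_Y))=0$ directly through the same $R_0$.

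Your route---bound $(c_n)$ away from $0$ and $\infty$ via the diameter inequality $d_{\mathrm{GH}}\ge\tfrac12|c\,\mathrm{diam}(X)-\mathrm{diam}(Y)|$, extract a limit $c^*\in(0,\infty)$, and pass to the limit by the Lipschitz estimate $d_{\mathrm{GH}}((X,cd_X),(X,c'd_X))\le\tfrac12|c-c'|\,\mathrm{diam}(X)$---is equally valid. It is a more analytic argument that only uses finite diameter (not finite cardinality) until the very last step where $d_{\mathrm{GH}}=0$ is upgraded to an isometry; in particular it transports verbatim to compact metric spaces. The paper's pigeonhole on correspondences, by contrast, is specific to the finite setting but is self-contained: it needs neither the Lipschitz continuity lemma nor the diameter lower bound. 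Your explicit handling of the degenerate cases ($|Y|=1$ giving $c^*=0$, and $|X|=1$ with $|Y|\ge 2$ being excluded) is also slightly more thorough than the paper's treatment.
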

 
 \begin{proof}
 It suffices to prove the necessity. Our proof follows in line with that of Proposition 3 in \cite{degregorio2020notion}; the main difference is that we do not need to prove the bijectivity and monotonicity for the rescaling function since we have already identified them as positive real numbers.
 
 Set $\overline{d_{\mathrm{GH}}}=0$.  There exists a sequence of real positive numbers $\{c_n\}$ such that 
 \begin{equation}\label{eq:lim_gh-distance}
     \lim_{n\to \infty}d_{\mathrm{GH}}((X,c_n d_X),\, (Y,d_Y))=0.
 \end{equation}
 Recall from Definition \ref{def:GH}, the Gromov--Hausdorff distance is given by a distortion of correspondences. Thus, condition \eqref{eq:lim_gh-distance} is equivalent to saying there is a sequence $\{c_n\}$ and a sequence of correspondences $\{R_n\}$ such that
$
 \lim_{n\to\infty}\tilde{d}(R_n,\, (X,c_nd_X),\, (Y,d_Y))=0.
$
 For finite metric spaces, the set of correspondences $\mathcal{R}$ is also finite. By passing to subsequences, we can find a correspondence $R_0$ and a subsequence $\{c_{n_k}\}$  such that
$
 			\lim_{k\to\infty}|c_{n_k}d_X(x,x')-d_Y(y,y')|=0
$
for all $(x,y), (x',y') \in R_0$.
 Thus the sequence $\{c_{n_k}\}$ converges to a number $c^*\ge 0$ such that
$
 d_{\mathrm{GH}}((X,c^*d_X),\, (Y,d_Y))=0.
$
 
 If $c^*>0$ then $(X,c^*d_X)$ is isometric to $(Y,d_Y)$, as desired; if $c^*=0$, then $(X, 0\cdot d_X)$ degenerates to a one-point space (since all points are identified) and so does $(Y, d_Y)$.
 \end{proof}



\section{Implementation of Methods}
\label{sec:methods}

We now give computation procedures for the dilation-invariant bottleneck dissimilarity and distance.  We also give theoretical results on the correctness and complexity of our algorithms.  Finally, we give a performance comparison of our proposed algorithm against the current algorithm to compute shift-invariant bottleneck distances \cite{sheehy2018computing}.

\subsection{Computing the Dilation-Invariant Bottleneck Dissimilarity}

We now give an algorithm to search for the optimal dilation and compute the dilation-invariant bottleneck dissimilarity. The steps that we take are as follows: first, we show that the optimal dilation value $c^*$ falls within a compact interval, which is a consequence of the boundedness property in Proposition \ref{prop:DI-dissimilarity}. Then we present our algorithm to search for the optimal dilation $c^*$ and assess the correctness of the algorithm and analyze its computational complexity. Finally, we improve the search interval by tightening it to make our algorithm more efficient, and reveal the structure of the dissimilarity function with respect to the dilation parameter. 

For convenience, we first state the following definitions.

\begin{definition}
Let $A$ be a persistence diagram.
\begin{enumerate}
    \item The {\em persistence of a point} $a=(a_x,a_y)\in\mathbb{R}^2$ is defined as $\pers(a):=\frac{a_y-a_x}{2}$. For a persistence diagram $A$, the {\em characteristic diagram} $\chi(A)$ is the multiset of points in $A$ with the largest persistence; 
    \item The {\em persistence of a diagram} is defined as the largest persistence of its points $\pers(A):=\max_{a\in A}\{\pers(a)\}=\pers(\chi(A))$;
    \item The {\em bound of a diagram} is defined as the largest $\infty$-norm of its points $\bd(A):=\max_{a\in A}\{|a|_\infty\}=\max_{a\in A}\{a_y\}$;
\end{enumerate}
\end{definition}
As above in Proposition \ref{prop:DI-dissimilarity}, the empty diagram $D_0$ consists of only the diagonal of points with infinite multiplicity $\Delta$ and we adopt the convention that $D_0 = 0A$ for every $A$.

\subsubsection{Existence of an Optimal Dilation $c^*$}


Searching for $c^*$ over the entire ray $(0,\infty)$ is intractable, but thanks to the boundedness of $\overline{d_D}(A, B)$, we only need to search for $c^*$ over values of $c$ where $cA$ lies in the ball centered at $B$ with radius $d_0$, where $d_0$ is the smaller of the standard bottleneck distance either between $A$ and $B$ or between $B$ and the empty diagram $D_0$. This yields a compact interval within which $c^*$ lives.
  


 \begin{theorem}
  	Let $A$ and $B$ be persistence diagrams and set $d_0 := \min\{d_{\infty}(A,B),\, d_{\infty}(D_0, B)\}$. The optimal dilation $c^*$ lies in the interval
$$  	
\bigg[\frac{\pers(B)-d_0}{\pers(A)},\frac{\pers(B)+d_0}{\pers(A)} \bigg].
$$
  \end{theorem}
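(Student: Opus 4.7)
The plan is to leverage the boundedness clause of Proposition~\ref{prop:DI-dissimilarity}, which guarantees $\overline{d_D}(A, B) \le d_0$. Assuming the optimal dilation $c^*$ is attained (as discussed after Definition~\ref{def:DI-dissimilarity}), this translates into $d_\infty(c^* A, B) \le d_0$, and I will read off the two bounds on $c^*$ by examining how an optimal bottleneck matching treats the most-persistent point of each diagram.

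The only metric ingredient I need is the simple Lipschitz estimate
\[
|\pers(p) - \pers(q)| \;=\; \tfrac{1}{2}\bigl|(p_y - q_y) - (p_x - q_x)\bigr| \;\le\; \|p - q\|_\infty,
\]
together with the fact that the $\ell_\infty$-distance from any point $p$ above the diagonal to its projection equals $\pers(p)$. Fix an optimal bottleneck matching $\gamma$ realising $d_\infty(c^* A, B)$, so every off-diagonal point is matched within $\ell_\infty$-distance at most $d_0$ of its image (whether that image lies in the other diagram or on the diagonal).

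For the upper bound, pick $a^* \in \chi(A)$, so $\pers(a^*) = \pers(A)$ and $c^* a^* \in c^* A$ has persistence $c^* \pers(A)$. Under $\gamma$, the point $c^* a^*$ is matched either to some $b \in B$ or to the diagonal. In the first case, the Lipschitz estimate gives $|c^* \pers(A) - \pers(b)| \le d_0$, and $\pers(b) \le \pers(B)$ forces $c^* \pers(A) \le \pers(B) + d_0$. In the second case, the matching cost itself is $c^* \pers(A) \le d_0 \le \pers(B) + d_0$, which is stronger than the claimed bound. Dividing by $\pers(A)$ yields $c^* \le (\pers(B) + d_0)/\pers(A)$.

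Dually, for the lower bound I chase $b^* \in \chi(B)$. If $\gamma$ matches $b^*$ to some $c^* a \in c^* A$, then $|\pers(B) - c^* \pers(a)| \le d_0$, and using $c^* \pers(a) \le c^* \pers(A)$ gives $c^* \pers(A) \ge \pers(B) - d_0$. If instead $\gamma$ matches $b^*$ to the diagonal, then $\pers(B) \le d_0$, so $(\pers(B) - d_0)/\pers(A) \le 0$ and the bound is vacuous since $c^* \ge 0$. The only real delicacies are the degenerate corner cases: $\pers(A) = 0$ (which must be excluded as a standing hypothesis, since otherwise the interval is ill-defined) and the boundary value $c^* = 0$, in which the convention $D_0 = 0 \cdot A$ identifies $c^*A$ with the empty diagram and the matching argument still applies verbatim. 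Neither case disturbs the structure of the argument above.
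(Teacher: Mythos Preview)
Your argument is correct and rests on the same underlying fact as the paper's---namely that $\pers(\cdot)$ is $1$-Lipschitz with respect to $d_\infty$---but you arrive at it by a different route. The paper avoids any case analysis by invoking the reverse triangle inequality for $d_\infty$ against the empty diagram: since $d_\infty(D,D_0)=\pers(D)$ for any $D$, one has $d_\infty(cA,B)\ge \lvert c\,\pers(A)-\pers(B)\rvert$ in a single stroke, and then any $c$ with $\lvert c\,\pers(A)-\pers(B)\rvert>d_0$ is immediately excluded. Your version instead fixes an optimal matching and tracks where the most-persistent point of each diagram is sent, splitting into ``matched to an off-diagonal point'' versus ``matched to the diagonal.'' This is more hands-on and slightly longer, but it has the virtue of making the mechanism explicit at the level of individual points, and it handles the degenerate cases ($c^*=0$, or $b^*$ sent to the diagonal) transparently. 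Either way the content is the same Lipschitz estimate; the paper's packaging is just more compact.
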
 

\begin{proof}
For values of $\displaystyle c>\frac{\pers(B)+d_0}{\pers(A)}$, we have 
	\begin{align*}
	d_\infty(cA,B) \ge d_\infty(cA,\Delta)-d_\infty(B,\Delta) & = c\pers(A)-\pers(B)\\
	& > \pers(A)\frac{\pers(B)+d_0}{\pers(A)}-\pers(B)
 = d_0\ge \overline{d_D}(A,B).
	\end{align*}
	Similarly, for values of $\displaystyle c<\frac{\pers(B)-d_0}{\pers(A)}$, then
$
		\pers(cA)=c\cdot \pers(A)<\pers(B)-d_0.
$
	Therefore,
$$
		d_\infty(cA,B) \ge d_\infty(B,D_0)-d_\infty(cA,D_0)>d_0\ge \overline{d_D}(A,B).
$$
	These observations combined give us the result that the optimal dilation must reside in the interval
$$
		\bigg[\frac{\pers(B)-d_0}{\pers(A)},\frac{\pers(B)+d_0}{\pers(A)} \bigg].
$$
\end{proof}

\begin{remark}
The optimal dilation $c^*$ need not be unique.  In numerical experiments, we found that there can be multiple occurrences of $c^*$ within the search interval.
\end{remark}

\subsubsection{A Grid Search Method}\label{sec:direct-searching}

Set $\displaystyle c_{\min}:=\frac{\pers(B)-d_0}{\pers(A)}$ and $\displaystyle c_{\max}:=\frac{\pers(B)+d_0}{\pers(A)}$. Consider the uniform partition
$
	c_{\min}=t_0<t_1<\cdots<t_N=c_{\max}
$
where
$$
	t_{i+1}-t_i = \frac{c_{\max}-c_{\min}}{N} = \frac{2d_0}{N\pers(A)}.
$$
We then search for the minimum of the following sequence
\begin{equation}
\label{eq:dissim_est}
	\widehat{d_D}(A,B):=\min\{d_\infty(t_iA,B)\}_{i=0}^N
\end{equation}
and set $c^* := \mathop{\arg\min}_{t_i}\{d_\infty(t_iA,B)\}_{i=0}^N$.
\paragraph{Complexity:} Our grid search approach calls any bipartite matching algorithm to compute bottleneck distances $N$ times. In software packages such as \texttt{persim} in \texttt{scikit-tda} \citep{scikittda2019}, the Hopcroft--Karp algorithm is used to compute the standard bottleneck distance. The time complexity of the Hopcroft--Karp algorithm is $O(n^{2.5})$, where $n$ is the number of points in the persistence diagrams \citep{hk1973}.  Thus the overall complexity for our direct search algorithm is $O(Nn^{2.5})$. \citet{efrat2001} present a bipartite matching algorithm for planar points with time complexity $O(n^{1.5}\log n)$ which was then implemented to the setting of  persistence diagrams by \citet{morozov2017}. The total complexity of our direct search can therefore be improved to $O(Nn^{1.5}\log n)$ using software libraries such as \texttt{GUDHI} \citep{gudhi:urm} or \texttt{Hera} \citep{kerber52hera}. 

\paragraph{Correctness:} The convergence of this algorithm relies on the continuity of the standard bottleneck distance function. Specifically, for $\widehat{d_D}(A,B)$ given by \eqref{eq:dissim_est}, we have the following convergence result.
\begin{theorem}
\label{thm:correct}
    Given any partition $c_{\min}=t_0<t_1<\cdots<t_N=c_{\max}$,
$$
		0\le \widehat{d_D}(A,B)-\overline{d_D}(A,B)\le \frac{2d_0\bd(A)}{N\pers(A)}.
$$
\end{theorem}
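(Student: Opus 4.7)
The plan is to treat this as a standard Lipschitz-plus-grid estimate for the one-parameter family of functions $c \mapsto d_\infty(cA, B)$ restricted to the search interval $[c_{\min}, c_{\max}]$. The lower bound is essentially definitional, and the upper bound reduces to showing that this family is Lipschitz with constant at most $\bd(A)$.

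For the lower bound $\widehat{d_D}(A,B) \geq \overline{d_D}(A,B)$: every grid point $t_i$ lies in $[c_{\min}, c_{\max}]$, so $d_\infty(t_iA, B) \geq \inf_{c \in \mathbb{R}_+} d_\infty(cA, B) = \overline{d_D}(A, B)$. Taking the minimum over $i$ preserves the inequality.

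For the upper bound, the key step is a Lipschitz estimate on $c \mapsto d_\infty(cA, B)$. By the triangle inequality for the bottleneck distance,
$$
|d_\infty(cA, B) - d_\infty(c'A, B)| \leq d_\infty(cA, c'A).
$$
Using the natural bijective matching $ca \leftrightarrow c'a$ between $cA$ and $c'A$, each matched pair contributes $\|ca - c'a\|_\infty = |c - c'| \cdot \|a\|_\infty \leq |c - c'| \cdot \bd(A)$, and points on the diagonal match to the diagonal. Hence $d_\infty(cA, c'A) \leq |c - c'| \cdot \bd(A)$, and $c \mapsto d_\infty(cA, B)$ is Lipschitz with constant $\bd(A)$.

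Now since the optimal dilation $c^*$ lies in $[c_{\min}, c_{\max}]$ by the previous theorem, there is some grid point $t_i$ with $|c^* - t_i| \leq \frac{c_{\max} - c_{\min}}{N} = \frac{2d_0}{N \pers(A)}$. Combining with the Lipschitz estimate,
$$
\widehat{d_D}(A,B) \leq d_\infty(t_i A, B) \leq d_\infty(c^* A, B) + \bd(A) |c^* - t_i| \leq \overline{d_D}(A,B) + \frac{2 d_0 \bd(A)}{N \pers(A)},
$$
which is the claimed bound.

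The only subtle point will be justifying the matching used in the Lipschitz estimate: one must confirm that the rescaling $ca \mapsto c'a$ is indeed a valid bottleneck matching (pairing each off-diagonal point with its rescaled image and matching diagonal-to-diagonal), which follows from the fact that scaling is a bijection on the off-diagonal support of $A$ and preserves the diagonal. Everything else is a routine triangle-inequality-plus-Lipschitz argument and poses no genuine obstacle.
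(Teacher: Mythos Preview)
Your proof is correct and follows essentially the same route as the paper: both use the triangle inequality $|d_\infty(cA,B)-d_\infty(c'A,B)|\le d_\infty(cA,c'A)$, bound the right-hand side via the natural bijection $ca\mapsto c'a$ to get a Lipschitz constant of $\bd(A)$, and then use that $c^*$ lies within one mesh-width $\frac{2d_0}{N\pers(A)}$ of some grid point. The only cosmetic difference is that you spell out the (trivial) lower bound explicitly, while the paper leaves it implicit.
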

\begin{proof}
	For any $c,c'>0$, by the triangle inequality
$
		|d_\infty(cA,B)-d_\infty(c'A,B)|\le d_\infty(cA,c'A).
$
	Consider the trivial bijection given by $ca\mapsto c'a$ for any $a\in A$. Thus, 
$$
		d_\infty(cA,c'A)\le |c-c'|\cdot \max_{a\in A}\|a\|_\infty = |c-c'|\bd(A).
$$
	For any $N$-partition of the interval $[c_{\min},c_{\max}]$, the optimal dilation must lie in a subinterval $c^* \in[t_j, t_{j+1}]$ for some $j$. Then
$$
			\widehat{d_D}(A,B)-\overline{d_D}(A,B)\le d_\infty(t_jA,B)-\overline{d_D}(A,B)
			\le d_\infty(t_jA,c^* A)
			\le \frac{2d_0}{N\pers(A)}\bd(A).
$$
\end{proof}

\begin{remark}
The convergence rate is at least linear with respect to partition number $N$.    
\end{remark}


In light of Theorem \ref{thm:correct}, notice that although the standard bottleneck distance function is continuous, it may not be differentiable with respect to dilation. 

\begin{remark}
Recent results generalize persistence diagrams to {\em persistence measures} and correspondingly, the bottleneck distance to the {\em optimal partial transport distance} \citep{divol2021understanding}. In this setting, the optimization problem can be relaxed on the space of persistence measures \citep{lacombe2018large}, and gradient methods may be used to find the optimal dilation.
\end{remark}


\subsubsection{Tightening the Search Interval}
\label{sec:tight}

\begin{figure}[htbp]
	\centering
	\includegraphics[width=0.65\textwidth]{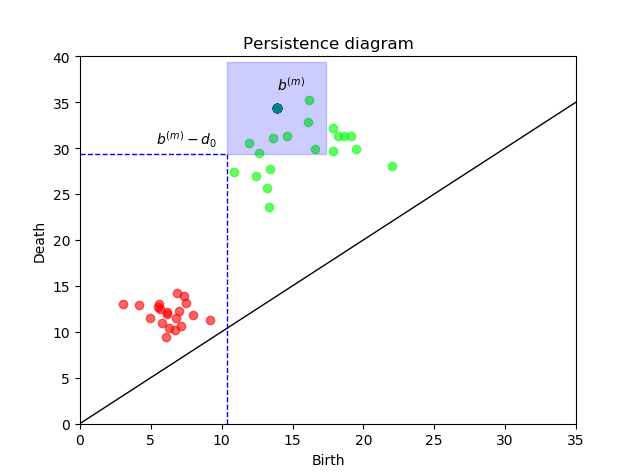}
	\caption{Illustration of the searching scheme: while we are dynamically scaling the red persistence diagram, the effective scaling must move some points in the red diagram into the blue box centered at $b^{(m)}$ with radius $d_0$. Otherwise, it is impossible to obtain a bottleneck distance less than $d_0$ which is an upper bound of the dilation-invariant bottleneck dissimilarity.}
\end{figure}

We define the function $\Theta(c) := d_\infty(cA,B)$. The optimal dilation $c^*$ is the minimizer of $\Theta(c)$ and lies in the interval $[c_{\min},c_{\max}]$. The widest interval is given by the most pessimistic scenario in terms of persistence point matching: this happens when $\pers(A)\ll\pers(B)$ and $d_\infty(A,B) = \pers(B)$, which means the bijective matching of points is degenerate and simply sends every point to the diagonal $\Delta$. In this case, $c_{\min}=0$ and $c_{\max}=\pers(B)/\pers(A)$.  This pessimistic scenario indeed arises in our IR problem of interest, as will be discussed and exemplified further on in Section \ref{sec:application}.  We can derive a sharper bound as follows.


Let $b^{(m)}\in\chi(B)$ be the point with the longest (finite) death time. Then $b^{(m)}$ has two properties:
\begin{enumerate}[(i)]
\item \ it has the largest persistence, and 
\item \ it has the largest $y$-coordinate.
\end{enumerate}
The first property (i) implies that while we are scaling $A$, we need to find a nontrivial matching point in $A$ in order to decrease the standard bottleneck distance. Thus there is at least one point lying in the box centered at $b^{(m)}$ with radius $d_0$.  The second property (ii) guarantees we can move $A$ as far as the maximal death value in order to derive a sharper bound. 


\begin{theorem}\label{thm:left-end}
	Let $b^{(m)}\in \chi(B)$ be the point with largest $y$-coordinate. If $\displaystyle c\le\frac{b_y^{(m)}-d_0}{\bd(A)}$, then $\Theta(c)\ge d_0$. Furthermore, if $\displaystyle c\le\min\bigg\{\frac{b_y^{(m)}+b_x^{(m)}}{2\bd(A)},\frac{\pers(B)}{\pers(A)}\bigg\}$, then $\displaystyle \Theta(c)=\pers(B)$.
\end{theorem}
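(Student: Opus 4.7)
The plan is to handle the two assertions in sequence, both times by fixing any bipartite matching $\gamma$ realizing $\Theta(c)=d_\infty(cA,B)$ and analyzing how the distinguished point $b^{(m)}$ is matched. Since $b^{(m)}\in\chi(B)$, by definition $\pers(b^{(m)})=\pers(B)$, and this will be the quantity that controls the ``matched to the diagonal'' case. The hypothesis on $c$, together with $\bd(A)=\max_{a\in A} a_y$, controls the ``matched to a point of $cA$'' case through the $y$-coordinate.

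For the first assertion, I would note that any matching $\gamma$ either sends $b^{(m)}$ to some $ca\in cA$ or to its projection on $\Delta$. In the first case, because $ca_y\le c\,\bd(A)\le b_y^{(m)}-d_0$, we get
$$
\|b^{(m)}-ca\|_\infty \;\ge\; b_y^{(m)}-ca_y \;\ge\; d_0.
$$
In the second case, the cost is $\pers(b^{(m)})=\pers(B)$, and since $d_0\le d_\infty(D_0,B)=\pers(B)$, this cost is also at least $d_0$. Taking the sup over matched pairs then yields $\Theta(c)\ge d_0$.

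For the second assertion, I would first prove the upper bound $\Theta(c)\le\pers(B)$ by exhibiting the matching that sends every point of $cA$ and every point of $B$ to its diagonal projection: its cost is $\max\{\pers(cA),\pers(B)\}=\max\{c\,\pers(A),\pers(B)\}=\pers(B)$, where the last equality uses $c\le\pers(B)/\pers(A)$. For the lower bound, I would repeat the case split on how $\gamma$ treats $b^{(m)}$: matching to the diagonal again costs exactly $\pers(B)$; matching to $ca\in cA$ costs at least $b_y^{(m)}-c a_y\ge b_y^{(m)}-c\,\bd(A)$, and the hypothesis $c\le (b_y^{(m)}+b_x^{(m)})/(2\bd(A))$ is precisely calibrated so that $b_y^{(m)}-c\,\bd(A)\ge (b_y^{(m)}-b_x^{(m)})/2=\pers(b^{(m)})=\pers(B)$. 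Combining the two bounds gives $\Theta(c)=\pers(B)$.

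The only delicate point, and the place where the statement is carefully tuned, is the second half: the threshold $(b_y^{(m)}+b_x^{(m)})/(2\bd(A))$ is exactly what is needed to force the ``off-diagonal'' matching cost up to $\pers(B)$, while the companion bound $\pers(B)/\pers(A)$ is exactly what is needed to ensure the global ``everything to the diagonal'' matching does not exceed $\pers(B)$. I expect no real obstacle beyond verifying these two numerical inequalities in the case split; in particular, the fact that $b^{(m)}\in\chi(B)$ (so $\pers(b^{(m)})=\pers(B)$) and the identity $d_\infty(D_0,B)=\pers(B)$ are the essential structural ingredients rather than any deeper argument.
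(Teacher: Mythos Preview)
Your proof is correct and follows essentially the same approach as the paper's: bound $\|ca-b^{(m)}\|_\infty$ from below via the $y$-coordinate difference $b_y^{(m)}-ca_y$, use $ca_y\le c\,\bd(A)$ together with the hypothesis on $c$, and for the second assertion combine this lower bound with the degenerate ``everything to the diagonal'' matching for the upper bound. The one place you are actually more careful than the paper is in explicitly treating the case where $b^{(m)}$ is matched to the diagonal (cost $\pers(B)\ge d_0$); the paper's write-up only displays the off-diagonal computation and lets that case pass silently.
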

\begin{proof}
    
	The property of largest persistence of $b^{(m)}$ implies that no point in $cA$ resides in the box around $b^{(m)}$ with radius $d_0$. In particular, for any point $a=(a_x,a_y)\in A$, note that $a_x\le a_y\le \bd(A)$ and
$$
	\|ca-b^{(m)}\|_\infty = \max\{|ca_x-b^{(m)}_x|,\, |ca_y-b^{(m)}_y|\} \ge b^{(m)}_y-ca_y > b^{(m)}_y-\frac{a_y}{\bd(A)}(b^{(m)}_y-d_0) \ge d_0.
$$
For any bijection $\gamma:B\to cA$, we have 
$
		\|b^{(m)}-\gamma(b^{(m)})\|_\infty\ge d_0
$
so $\Theta(c)\ge d_0$.
	
	Similarly, if $\displaystyle c<\frac{b_y^{(m)}+b_x^{(m)}}{2\bd(A)}$, for any point $a=(a_x,a_y)\in A$, 
$$
		\|ca-b^{(m)}\|_\infty = \max\{|ca_x-b^{(m)}_x|,\, |ca_y-b^{(m)}_y|\}
		\ge b^{(m)}_y-ca_y
		 >b^{(m)}_y-\frac{a_y}{2\bd(A)}(b^{(m)}_y+b^{(m)}_x)
		 \ge \pers(B).
$$
	Thus for any bijection $\gamma:B\to A$, we have 
$
\displaystyle		\|b^{(m)}-\gamma(b^{(m)})\|_\infty\ge \pers(B).
$
	However, the degenerate bijection sending every point to the diagonal gives $\displaystyle d_\infty(cA,B)\le\max\big\{\pers(cA),\, \pers(B)\big\}\le \pers(B)$. Thus, we have $\displaystyle \Theta(c)=\pers(B)$.
\end{proof}

This result allows us to update the lower bound of the search interval to $c_{\min} = \displaystyle \max\bigg\{\frac{b^{(m)}_y-d_0}{\bd(A)},\, \frac{\pers(B)-d_0}{\pers(A)}\bigg\}$.

 Symmetrically, by switching the roles of $A$ and $B$, we may update the upper bound of the search interval.
 
 \begin{theorem}\label{thm:right-end}
	Let $a^{(m)}\in \chi(A)$ be the point with largest $y$-coordinate. If $\displaystyle c\ge \frac{\bd(B)+d_0}{a^{(m)}_y}$, then $\Theta(c)\ge d_0$. Further, if $\displaystyle c\ge \max\bigg\{\frac{2\bd(B)}{a^{(m)}_x+a^{(m)}_y},\, \frac{\pers(B)}{\pers(A)}\bigg\}$, then $\displaystyle \Theta(c)=c\cdot \pers(A)$.
\end{theorem}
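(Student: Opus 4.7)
The plan is to prove Theorem \ref{thm:right-end} by mirroring the argument for Theorem \ref{thm:left-end}, swapping the roles of $A$ and $B$ and focusing on the rescaled point $ca^{(m)}\in cA$ in place of $b^{(m)}\in B$. First I would handle the claim $\Theta(c)\ge d_0$ under $c\ge(\bd(B)+d_0)/a^{(m)}_y$; the key step is to show that no point of $B$ lies within $\ell^\infty$-distance $d_0$ of $ca^{(m)}$. Since $ca^{(m)}_y\ge\bd(B)+d_0$ and every $b=(b_x,b_y)\in B$ satisfies $b_y\le\bd(B)$, we obtain
$$
\|ca^{(m)}-b\|_\infty\ge ca^{(m)}_y-b_y\ge d_0,
$$
so any pair $(ca^{(m)},b)$ already costs at least $d_0$. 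In a bijective matching between $cA$ and $B$, the only remaining option for $ca^{(m)}$ is the diagonal $\Delta$, at cost $c\pers(A)=c\pers(a^{(m)})$; combining this with the inequality $\pers(B)\ge d_0$ (which follows from $d_0\le d_\infty(D_0,B)=\pers(B)$) and the analogous constraint that the maximum-persistence point $b^{(m)}\in\chi(B)$ must itself be matched at cost at least $d_0$ should then yield $\Theta(c)\ge d_0$.

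For the identity $\Theta(c)=c\pers(A)$ under $c\ge\max\{2\bd(B)/(a^{(m)}_x+a^{(m)}_y),\pers(B)/\pers(A)\}$, I would sandwich $\Theta(c)$ between matching upper and lower bounds, both equal to $c\pers(A)$. The degenerate matching that sends every point to the diagonal has cost $\max(c\pers(A),\pers(B))$, and the hypothesis $c\ge\pers(B)/\pers(A)$ collapses this maximum to $c\pers(A)$, yielding $\Theta(c)\le c\pers(A)$. For the lower bound I would rewrite $c\ge 2\bd(B)/(a^{(m)}_x+a^{(m)}_y)$ as $ca^{(m)}_y-c\pers(A)\ge\bd(B)$, using $\pers(A)=(a^{(m)}_y-a^{(m)}_x)/2$, which gives
$$
\|ca^{(m)}-b\|_\infty\ge ca^{(m)}_y-b_y\ge ca^{(m)}_y-\bd(B)\ge c\pers(A)
$$
for every $b\in B$. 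Together with the diagonal cost $c\pers(A)$ itself, this forces the pair involving $ca^{(m)}$ to contribute at least $c\pers(A)$ in any matching, giving $\Theta(c)\ge c\pers(A)$.

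The main obstacle lies in the diagonal case of the first claim. In Theorem \ref{thm:left-end} the diagonal alternative for $b^{(m)}$ automatically costs $\pers(B)\ge d_0$, which closes the argument immediately by the definition of $d_0$. Here the diagonal alternative for $ca^{(m)}$ costs $c\pers(A)$, which is not automatically bounded below by $d_0$ from the stated hypothesis alone; closing this gap rigorously will require either a supplementary lower bound on $c$ (such as $c\ge d_0/\pers(A)$) or a finer combinatorial argument that exploits $b^{(m)}\in\chi(B)$ to guarantee that some other pair of the matching contributes cost $\ge d_0$ whenever the $ca^{(m)}$-to-diagonal pair is cheap. The second claim, by contrast, is expected to go through cleanly because the two conditions imposed on $c$ were tailored precisely to force the $ca^{(m)}$-pair cost past $c\pers(A)$.
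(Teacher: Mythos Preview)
Your overall approach is exactly the paper's: the paper proves the first claim by saying only that ``the calculation proceeds as above in the proof of Theorem \ref{thm:left-end},'' and for the second claim it gives precisely the sandwich argument you describe (the degenerate matching for the upper bound, and the estimate $\|ca^{(m)}-b\|_\infty\ge ca^{(m)}_y-\bd(B)\ge c\pers(A)$ for the lower bound). Your treatment of the second claim is correct and coincides with the paper's.

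The obstacle you isolate in the first claim is genuine, and the paper's one-line appeal to symmetry does not resolve it either. In Theorem \ref{thm:left-end} the diagonal option for $b^{(m)}$ costs $\pers(B)\ge d_0$ automatically; here the diagonal option for $ca^{(m)}$ costs $c\pers(A)$, and the hypothesis $c\ge(\bd(B)+d_0)/a^{(m)}_y$ does \emph{not} force this to be at least $d_0$. In fact the first claim is false as stated: take $A=\{(9,10),(0,1)\}$ and $B=\{(0,3)\}$. Then $\pers(A)=0.5$, $a^{(m)}=(9,10)$, $\bd(B)=3$, $\pers(B)=1.5$, and $d_\infty(A,B)=d_\infty(D_0,B)=1.5$, so $d_0=1.5$ and the hypothesis becomes $c\ge 0.45$. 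Yet at $c=2$ the matching $(0,2)\leftrightarrow(0,3)$, $(18,20)\to\Delta$ has cost $\max(1,1)=1<d_0$, so $\Theta(2)<d_0$. Consequently no ``finer combinatorial argument that exploits $b^{(m)}$'' can work; your first proposed fix---an additional lower bound such as $c\ge d_0/\pers(A)$---is what is actually required to make the argument (and the statement) go through.
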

\begin{proof}
	The property of largest persistence of $b^{(m)}$ implies that no point in $B$ resides in the box around $ca^{(m)}$ with radius $d_0$. The calculation proceeds as above in the proof of Theorem \ref{thm:left-end}. With regard to the second property of $b^{(m)}$ having the largest $y$-coordinate, if $\displaystyle c\ge\frac{2\bd(B)}{a^{(m)}_x+a^{(m)}_y}$, then for any point $b=(b_x,b_y)\in B$,
\begin{align*}
		\|ca^{(m)}-b\|_\infty = \max\{|ca_x^{(m)}-b_x|,\, |ca_y^{(m)}-b_y|\}
		&\ge ca^{(m)}_y-b_y\\
		& >c\bigg(\frac{a^{(m)}_x+a^{(m)}_y}{2}+\frac{a^{(m)}_y-a^{(m)}_x}{2}\bigg)-\bd(B)\\
		& \ge c\pers(A)=\pers(cA),
\end{align*}
	thus $\displaystyle d_\infty(cA,B)\ge\pers(cA)$. The degenerate bijection matching every persistence point to the diagonal gives $\displaystyle d_\infty(cA,B)\le\max\big\{\pers(cA),\pers(B)\big\}\le \pers(cA)$. Therefore, $\Theta$ is a linear function with slope $\displaystyle \pers(A)$.
\end{proof}

The upper bound of the search interval can now be updated to $\displaystyle c_{\max}=\min\bigg\{\frac{\bd(B)+d_0}{a^{(m)}_y},\, \frac{\pers(B)+d_0}{\pers(A)}\bigg\}$.

As a consequence of Theorems \ref{thm:left-end} and \ref{thm:right-end}, we obtain the following characterization for the function $\Theta$, which is independent of the data. 
\begin{proposition}
Set $\displaystyle c_B = \min\bigg\{\frac{b_y^{(m)}+b_x^{(m)}}{2\bd(A)},\, \frac{\pers(B)}{\pers(A)}\bigg\}$ and $\displaystyle c_A = \max\bigg\{\frac{2\bd(B)}{a^{(m)}_x+a^{(m)}_y},\, \frac{\pers(B)}{\pers(A)}\bigg\}$.  Then
$$
\Theta(c) = \begin{cases}
\text{ is constant at } \pers(B) & \text{ for } 0\le c \le c_B;\\
\text{ attains the minimum } \overline{d_D}(A, B) & \text{ for } c_B\le c< c_A;\\
\text{ is linear with } c\cdot\pers(A) & \text{ for } c_A \le c.
\end{cases}
$$
\end{proposition}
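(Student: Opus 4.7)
The plan is to derive each of the three cases as a direct consequence of Theorems~\ref{thm:left-end} and~\ref{thm:right-end}, combined with one continuity-plus-compactness argument to pin down where the minimum of $\Theta$ actually lives.

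For the first region $0 \le c \le c_B$, the hypothesis matches the second statement of Theorem~\ref{thm:left-end} verbatim, so it immediately yields $\Theta(c) = \pers(B)$. Symmetrically, for the third region $c \ge c_A$, the second statement of Theorem~\ref{thm:right-end} gives $\Theta(c) = c \cdot \pers(A)$, so $\Theta$ is linear with slope $\pers(A)$ there. Neither case requires any additional work beyond matching the hypotheses of those theorems to the definitions of $c_B$ and $c_A$.

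For the middle region I would argue as follows. First observe, by inspection of the $\min$ and $\max$ in the definitions, that $c_B \le \pers(B)/\pers(A) \le c_A$, so the interval $[c_B, c_A)$ is nonempty and in particular contains $\pers(B)/\pers(A)$. Next, the degenerate bijection sending every persistence point to its projection on $\Delta$ yields the uniform upper bound $\Theta(c) \le \max\{c\pers(A),\, \pers(B)\}$; evaluated at $c = \pers(B)/\pers(A)$ this gives $\overline{d_D}(A,B) \le \pers(B)$. On the outer regions the values of $\Theta$ are no smaller than $\pers(B)$ (equal to it on the first region, and at least $c_A \pers(A) \ge \pers(B)$ on the third region), so every global minimizer of $\Theta$ must lie in $[c_B, c_A]$. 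The function $\Theta$ is continuous---in fact Lipschitz with constant $\bd(A)$, as established en route to Theorem~\ref{thm:correct}---and $[c_B, c_A]$ is compact, so the infimum $\overline{d_D}(A,B)$ is attained on this interval; since $\Theta$ grows strictly linearly past $c_A$, a minimizer may be chosen in $[c_B, c_A)$.

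The main obstacle is purely interpretational: the phrase ``attains the minimum'' in the middle case does not assert that $\Theta$ is constant on $[c_B, c_A)$, only that the global infimum $\overline{d_D}(A,B)$ is achieved somewhere in this interval. Once this reading is fixed, the proof reduces to (i)~matching the hypotheses of the two outer cases to the tightening theorems, (ii)~noting that $\pers(B)/\pers(A)$ sits inside both the $\min$ and the $\max$, and (iii)~invoking continuity of $\Theta$ together with compactness of $[c_B, c_A]$.
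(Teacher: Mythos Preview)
Your proposal is correct and follows exactly the approach the paper intends: the proposition is stated there only as ``a consequence of Theorems~\ref{thm:left-end} and~\ref{thm:right-end}'' with no further argument. Your continuity-plus-compactness step for the middle region, together with the observation that $c_B \le \pers(B)/\pers(A) \le c_A$, simply makes explicit what the paper leaves to the reader, and your reading of ``attains the minimum'' as locating a global minimizer (rather than asserting constancy) is the correct one.
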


This behavior is consistent with our findings in applications to real data as we will see in Section \ref{sec:application}.

\subsection{Computing Dilation-Invariant Bottleneck Distances}

Using the same framework as for the dilation-invariant bottleneck dissimilarity previously, we can similarly derive a direct search algorithm for the computation of dilation-invariant bottleneck distance $D_S$ as follows: following Theorems \ref{thm:left-end} and \ref{thm:right-end}, for any two persistence diagrams $A$ and $B$, set
	\begin{align*}
	    s_B &=  \log(b^{(m)}_y)-\log(\bd(A))-d_\infty(\log(A),\log(B))\\
	    s_A &=  \log(\bd(B))-\log(a^{(m)}_y)+d_\infty(\log(A),\log(B))
	\end{align*}
and search for the optimal shift $s^*$ in the interval $[s_B, s_A]$.
 
An alternative approach to computing the dilation-invariant bottleneck distance is to adopt the  kinetic data structure approach proposed by \cite{sheehy2018computing} to compute the shift-invariant bottleneck distance.  This method iteratively updates the matching value until the event queue is empty.  When the loop ends, it attains the minimum which is exactly $D_S$.  The algorithm runs in $O(n^{3.5})$ time where $n$ is the number of points in the persistence diagrams.

We remark here that our proposed direct search algorithm runs in $O(Nn^{1.5}\log n)$ time, which is a significant improvement over the kinetic data structure approach to compute the shift-invariant bottleneck distance.  On the other hand, the optimal value that we obtain is approximate, as opposed to being exact.

\subsubsection{Performance Comparison}

We compared both algorithms on simulated persistence diagrams; results are displayed in Table \ref{tab:runtime}. Note that in general, persistence diagrams contain points with 0 birth time which will cause an error of a $-\infty$ value when taking logs; we bypassed this difficulty by adding a small constant to all persistence intervals (in this test, the constant we used was $10^{-10}$).  This correction may be used for a general implementation of our method to compute dilation-invariant bottleneck distances. For the direct search algorithm, we fixed the number of partitions to be 100 in each test and used the inline function \texttt{bottleneck-distance} in \texttt{GUDHI}. The time comparison experiments were run on a PC with Intel(R) Core(TM) i7-6500U CPU @ 2.50GHz and 4 GB RAM; runtimes are given in wall-time seconds.

\begin{table}[htbp]
	\centering
	\caption{Comparison of the Direct Search (DS) Algorithm and the Kinetic Data Structure (KDS) Algorithm}
	\begin{tabular}{|c|c|c|c|c|c|}
		\hline
		Number of Points & Original Distance & KDS Distance & DS Distance & KDS Runtime & DS Runtime \\
		\hline
		32& 8.9164 & 6.9489 & 7.0635 & 1.8105 & 0.1874\\
		\hline
		64& 7.4414 & 6.9190 & 6.9572 & 17.0090& 0.3280\\
		\hline
		128& 0.5545 & 0.5407 & 0.5407 & 146.1578 & 0.6547\\
		\hline
		256& 1.6291 & 1.3621 & 1.3800 & 767.4551 & 1.4373 \\
		\hline
		512 & 2.3304& 1.9330 & 1.9459 & 12658.2584 &3.3468\\
		\hline
		1785 & 6.3851 & 3.1925 & 3.2511 & 614825.1576 & 51.1568\\
		\hline  
	\end{tabular}
\label{tab:runtime}
\end{table}

An immediate observation is the vastly reduced runtime of our proposed direct search algorithm over the kinetic data structure-based algorithm, particularly when the number of persistence points in the diagrams increases.  An additional observation is that our direct search algorithm tends to estimate conservatively and systematically return a distance that is larger than that returned by the kinetic data structure approach, which is known to give the exact value.  The difference margins between the two distances also tends to decrease as the number of persistence points in the diagrams increases.  Overall, we find that our direct search approach finds accurate results that are consistent with the exact values far more efficiently; when exact values are not required, the accuracy versus efficiency trade-off is beneficial.

We plotted the differences in the logs of the runtime and the number of persistence points to evaluate the time complexity of the two algorithms in practice, shown in Figure \ref{fig:empirical-complexity}.   The slope of each straight line gives the order of the computational complexity for each algorithm. We see that in our set of experiments, the empirical complexity is actually lower than, but still close to, the theoretical complexity bounds.   
\begin{figure}[htbp]
	\centering
	\includegraphics[width=0.65\textwidth]{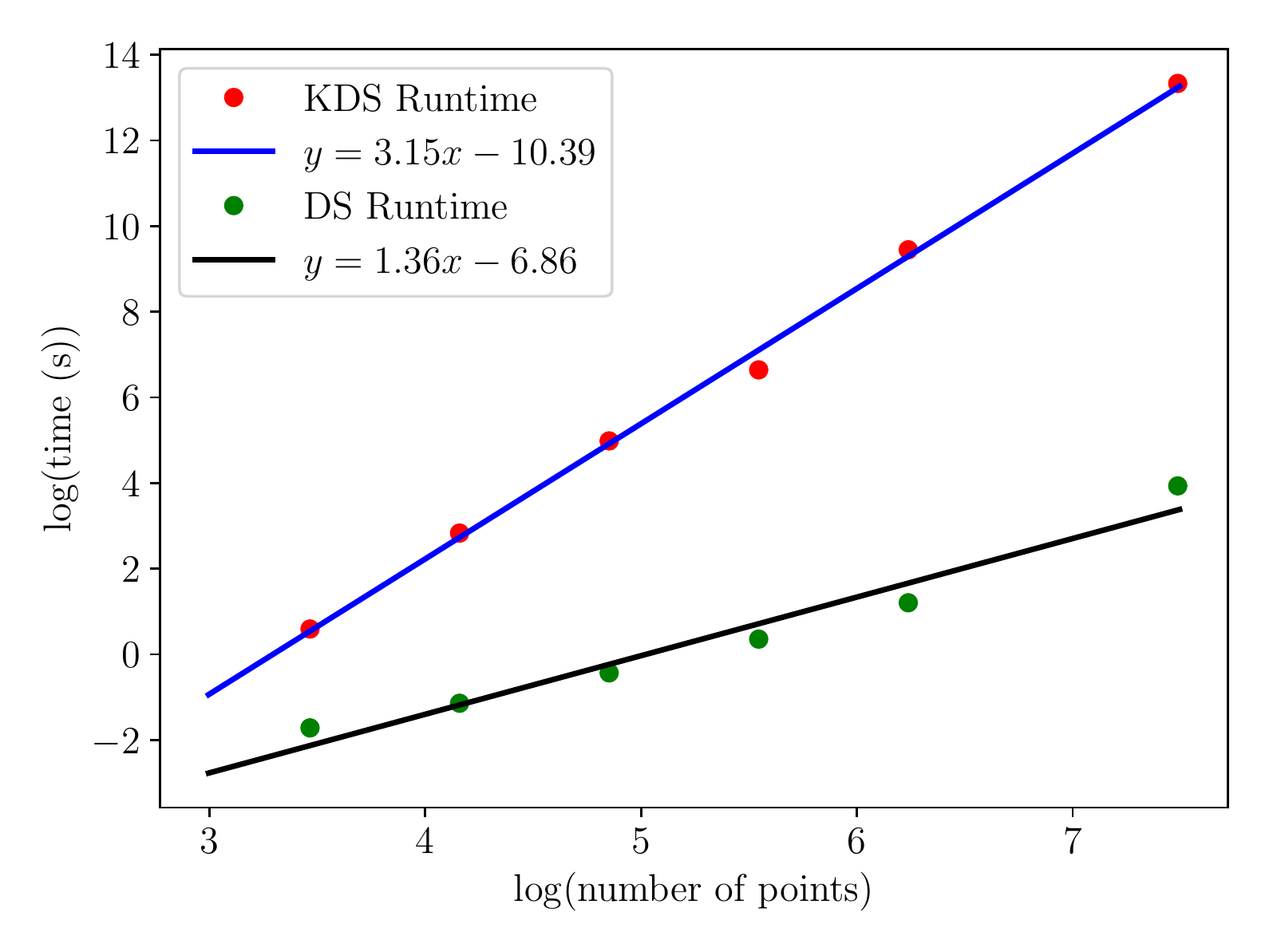}
	\caption{Plot of $\log({\rm time})-\log(\#{\rm points})$ for both search algorithms based on simulated persistence diagrams. The slopes match the theoretical orders of complexity.}
	\label{fig:empirical-complexity}
\end{figure} 

\subsection*{Software and Data Availability}

Software to compute the dilation-invariant bottleneck dissimilarity and to reproduce all results in this paper are available at \url{https://github.com/YueqiCao/Topological Retrieval}.


\section{Application to Data and Results}
\label{sec:application}

We present two applications to data: one is an exploratory analysis, where we evaluate our approach on two datasets that have a known hierarchical and network structure; the other is an example of inference where we perform topology-based information retrieval based on the dilation-invariant bottleneck distance.

\subsection{Exploratory Analysis: Database Representation for Information Retrieval}

We apply persistent homology to study the database structures in their manifold-embedded representations rather than on the graphs themselves.  Although persistent homology has been computed directly on graphs and networks (for instance, in applications in neuroscience \citep[e.g.,][]{10.1162/netn_a_00094}), manifolds may be equipped with a larger variety of metrics than graphs; moreover, on a manifold, distances may be defined between arbitration points along any path, while on a graph, we are constrained to traveling only along the graph edges.  Since the metric is a fundamental component in computing persistent homology, it is desirable to work in a setting where we may study a variety of metrics in a computationally efficient manner to obtain a better understanding of the persistence structure of the databases.

In IR, a dataset $Ds := \{d_1,...,d_N\}$ consisting of $N$ data points is embedded in a latent manifold $M$ via a model $f$. A query $q$ is then embedded in the same space and compared against all or a subset of the elements of $Ds$ via a distance metric $d$. The $m$ closest matches are then returned. Formally, the returned set $Rs$ is defined as $ Rs = \min_m \{d(f(Ds),f(q))\}$. The local structure of the space as well as the global connectivity of the dataset $Ds$ is of paramount importance in IR. 
Persistent homology is able to characterize both the global and local structure of the embedding space.
 
Following \citet{Nickel2017}, we embed the hierarchical graph structure of our datasets onto a Euclidean manifold and a Poincar\'{e} ball by minimizing the loss function 
\begin{equation}
    \mathcal{L} = \sum_{( u,  v) \in \mathcal{H}} \log \frac{e^{-d( u,  v)}}{\sum_{ v'\in \mathcal{N}(u)}e^{-d( u, v')}},
    \label{eq:loss}
\end{equation}
where $d$ is the distance of the manifold $M$, $\mathcal{H}$ is the set of hypernymity relations, and $\mathcal{N}=\{ v' \mid ( u, v') \notin D\}\cup\{ v\} $, thus the set of negative examples of $ u$ including $ v$.

In the case of the Euclidean manifold, we use two metrics: the standard $L_2$ distance and the cosine similarity,
$$
sim( x, y)=\frac{ x y}{\| x\|\| y\|},
$$
which has been used in inference in IR \cite{scarlini-etal-2020-sensembert, yap-etal-2020-adapting}.
On the Poincar\'{e} ball, the distance is given by 
\begin{equation}
\label{eq:poincare_dist1}
d_{P}( x, y)=\cosh^{-1}\left(1+\frac{2\| x- y\|^2}{(1-\| x\|^2)(1-\| y\|^2)}\right),
\end{equation}
where $\|\cdot\|$ is the standard Euclidean norm.  We optimize our model (\ref{eq:loss}) directly on each of the manifolds with respect to these three metrics.  

For the Poincar\'{e} ball, in addition to studying the direct optimization, we also study the representation of the Euclidean embedding on the Poincar\'{e} ball.  The Euclidean embedding is mapped to the Poincar\'{e} ball using the exponential map \citep{Mathieu2019}:
\begin{equation*}
\label{expmapeq}
\exp_{ x}^c( z) = 
 x\oplus_c \left[
\tanh\left(\frac{\sqrt{c}\,\lambda_{ x}^c\| z\|}{2}\right)\frac{\hat { z}}{\sqrt{c}}
\right],
\end{equation*}
where 
$\hat { z} :=  z/|| z||$, $\lambda_{ x}^c := 2/(1-c\| x\|^2)$, and $\oplus_c$ denotes M{\"o}bius addition under the curvature $c$ of the manifold \citep{Ganea2018networks}. Here, we take $c = -1$.

We calculate the VR persistent homology of the four embeddings to evaluate  the current IR inference approaches from a topological perspective.  Since we work with real-valued data, the maximum dimension for homology was set to 3.  All persistent homology computations in this work were implemented using \texttt{giotto-tda} \citep{tauzin2020giottotda}.  We note here that since VR persistent homology is computed, Ripser is available for very large datasets \citep{bauer2021ripser}.  Ripser is currently the best-performing software for VR persistent homology computation both in terms of memory usage and wall-time seconds (3 wall-time seconds and 2 CPU seconds to compute persistent homology up to 2 dimensions of simplicial complexes of the order of $4.5 \times 10^8$ run on a cluster, 3 wall-time seconds for the same dataset run on a shared memory system) \citep{otter2017roadmap}.

\paragraph{ActivityNet.}
ActivityNet is an extensive labeled video dataset of 272 human actions that are accompanied by a hierarchical structure, visualized in Figure \ref{fig:activity_net_network} \citep{caba2015activitynet}. This dataset has been previously used in IR \citep{Long_2020_CVPR}. 

\begin{figure}
\centering
  \includegraphics[scale=0.65]{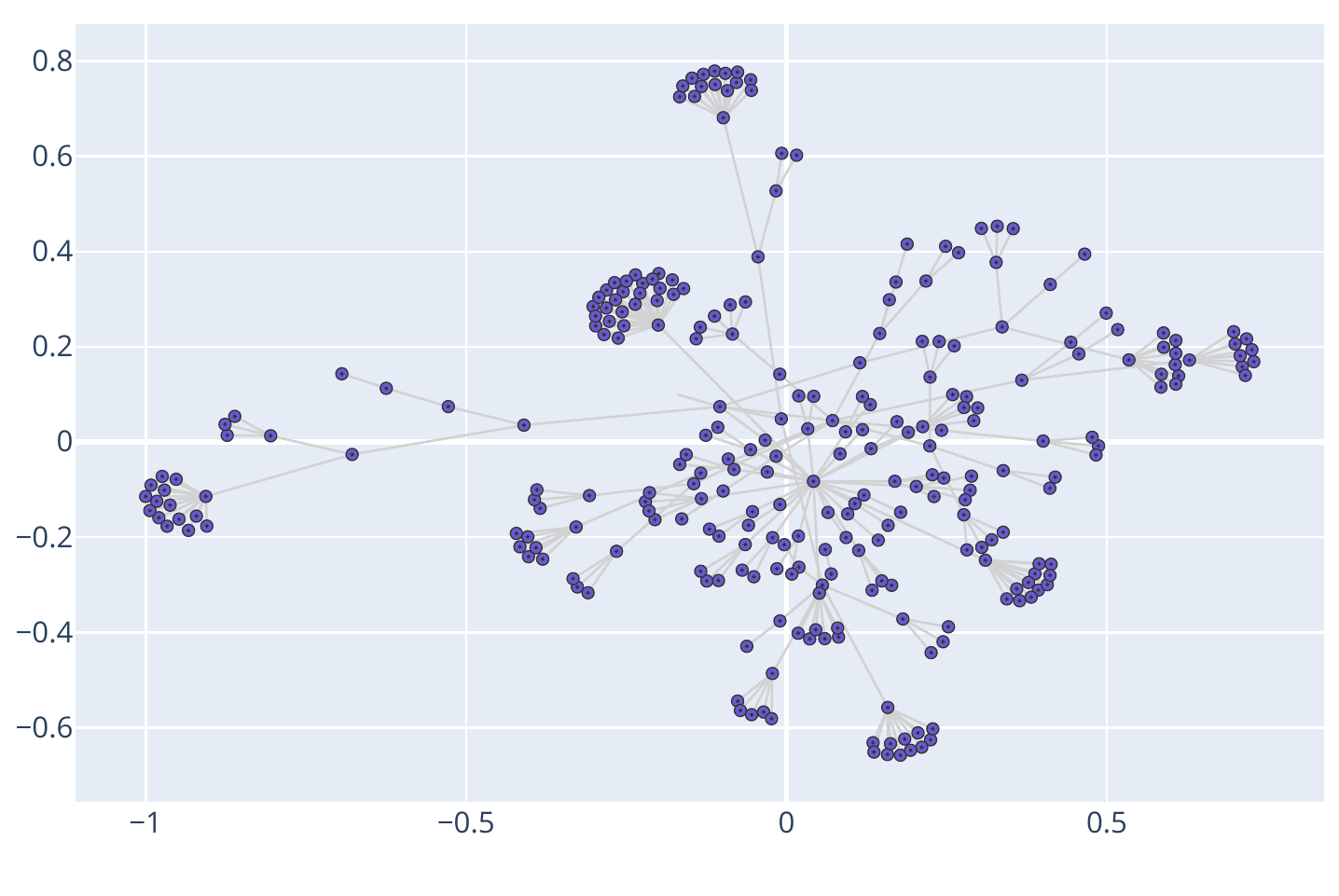}
  \caption{Visualization of ActivityNet Dataset.  Note that hierarchical structure of  the dataset naturally exhibits interesting topological properties such as cycles, clusters, and a tree-like structure.}
\label{fig:activity_net_network}
\end{figure}

\paragraph{Mammals of WordNet.}
WordNet is a lexical database that represents the hierarchy of the English language in a tree-like form, which has been extensively used in NLP \citep{10.1145/219717.219748}. In our application, we focus on the subportion of WordNet relating to mammals. We pre-process this dataset by performing a transitive closure \citep{Nickel2017}.\\


\noindent\textbf{Embedding the Data.}
Both datasets were embedded in a 5-dimensional space, following \citet{Nickel2017}.  
For both datasets, we trained the models for 1200 epochs with the default hyperparameters. For ActivityNet, we were able to achieve hypernimity mAP of $94.3\%$ and $97.2\%$ for the Euclidean manifold and Poincar\'e ball, respectively; while for the Mammals data, we achieved $45\%$ and $82.3\%$, respectively. These results are consistent with \citet{Nickel2017}.

\subsubsection{Topology of the Euclidean Embeddings}

The persistence diagrams of the Euclidean embeddings under both metrics and for both datasets are given in Figure \ref{fig:pers_eucl}.  We see that up to scaling, the persistence diagrams for both embeddings into Euclidean space with respect to the $L_2$ metric and cosine similarity are similar in terms of location and distribution of $H_0$, $H_1$, and $H_2$ points on the persistence diagrams.  We quantify this similarity by computing dilation-invariant bottleneck dissimilarities. The graphs of the search procedure for each dataset are given in Figure \ref{fig:bottleneck}.  We first note that the standard bottleneck distances between the $L_2$ and cosine similarity persistence diagrams are $1.642$ for the ActivityNet dataset, and $0.6625$ for the Mammals dataset.  The dilation-invariant bottleneck dissimilarities are achieved at the minimum value of the bottleneck distance in our parameter search region.  The dilation-invariant bottleneck dissimilarities from cosine similarity to $L_2$ are $1.311$ for the ActivityNet dataset and $0.3799$ for the Mammals dataset.

The dilation parameter for Euclidean embeddings can be alternatively searched according to the scale of $L_2$ embedded points as follows.  Since mutual distances under the cosine embedding are always bounded by 1, we can regard the induced finite metric space as a baseline. For the $L_2$ embedding, mutual dissimilarities are affected by the scale of the embedded point cloud, i.e., if we multiply the points by a factor $R$, the induced metric will also dilate by a factor $R$.  In experiments, the scale is $18.13$ for the $L_2$ embedding of ActivityNet and $3.188$ for Mammals, which are consistent with dilation parameters we found.


The visual compatibility of the persistence diagrams together with the small dilation-invariant bottleneck distances indicate that the topology of the database is preserved in the Euclidean embedding, irrespective of the metric.  

\begin{figure*}[]
  \centering
  \includegraphics[scale=0.375]{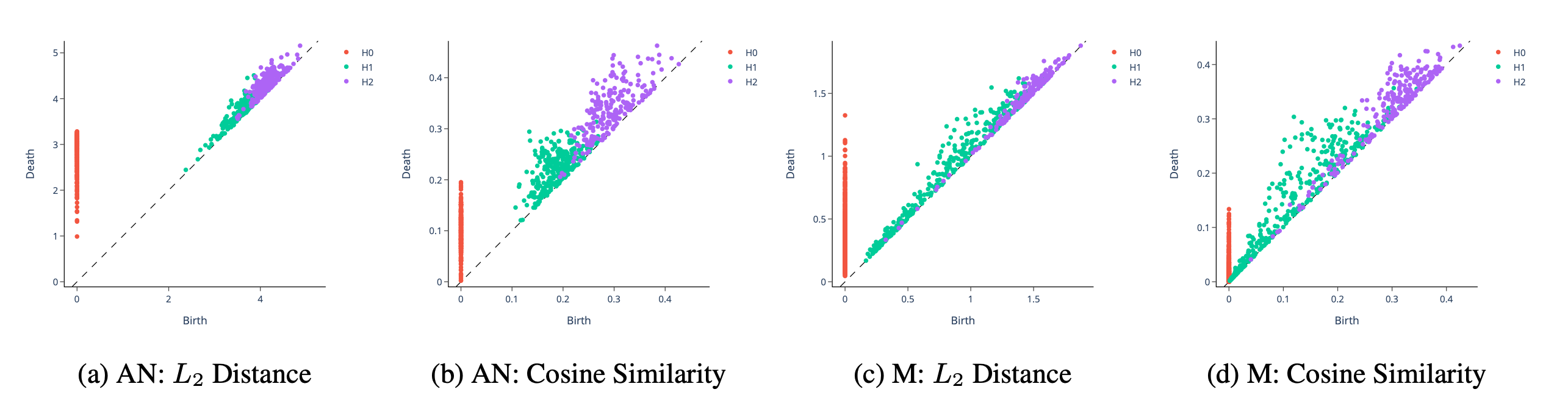}
  \caption{Persistence Diagrams of Euclidean Embeddings.  AN: ActivityNet, M: Mammals. \label{fig:pers_eucl}}
\end{figure*}

\subsubsection{Topology of the Poincar\'{e} Ball Embeddings}

The persistence diagrams for the Poincar\'{e} ball embeddings for both datasets are given in Figure \ref{fig:pers_poinc}.  For the ActivityNet dataset, we see that both embeddings fail to identify the $H_1$ and $H_2$ homology inherent in the dataset (see Figure \ref{fig:activity_net_network}), which, by contrast, the Euclidean embeddings successfully capture.  For the Mammals dataset, only the exponential map embedding into the Poincar\'{e} ball is able to identify $H_1$ and $H_2$ homology, but note that these are all points close to the diagonal.  Their position on the diagonal indicates that the  persistence of their corresponding topological features is not significant: these features are born and die almost immediately, are therefore likely to be topological noise.  We now investigate this phenomenon further.

\begin{figure*}[]
  \centering
\includegraphics[scale=0.375]{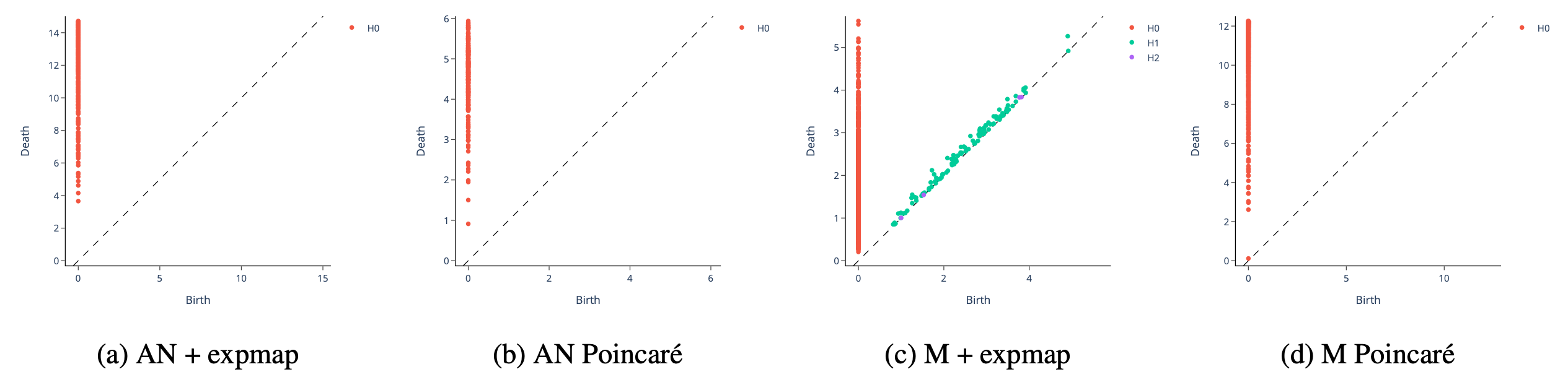}
  \caption{Persistence Diagrams of Poincar\'e Embeddings.  AN: ActivityNet, M: Mammals, expmap signifies Euclidean optimization followed by an exponential map embedding into the Poincar\'{e} ball, while  Poincar\'e signifies direct optimization on the Poincar\'{e} ball. 
  \label{fig:pers_poinc}}
\end{figure*}

From (\ref{eq:poincare_dist1}), we note that an alternative representation for the distance function maybe obtained as follows.  Set
\begin{equation}
\label{eq:poincare_H}
A(x,y) := 1+\frac{2\|x-y\|^2}{(1-\|x\|^2)(1-\|y\|^2)},
\end{equation}
then (\ref{eq:poincare_dist1}) may be rewritten as
\begin{equation}
\label{eq:poincare_dist2}
d_P(x,y)=\log (A(x,y)+\sqrt{A(x,y)^2-1}).
\end{equation}

\begin{figure*}[]
\centering
\subfigure[AN Poincar\'{e}]{\includegraphics[width=0.24\linewidth]{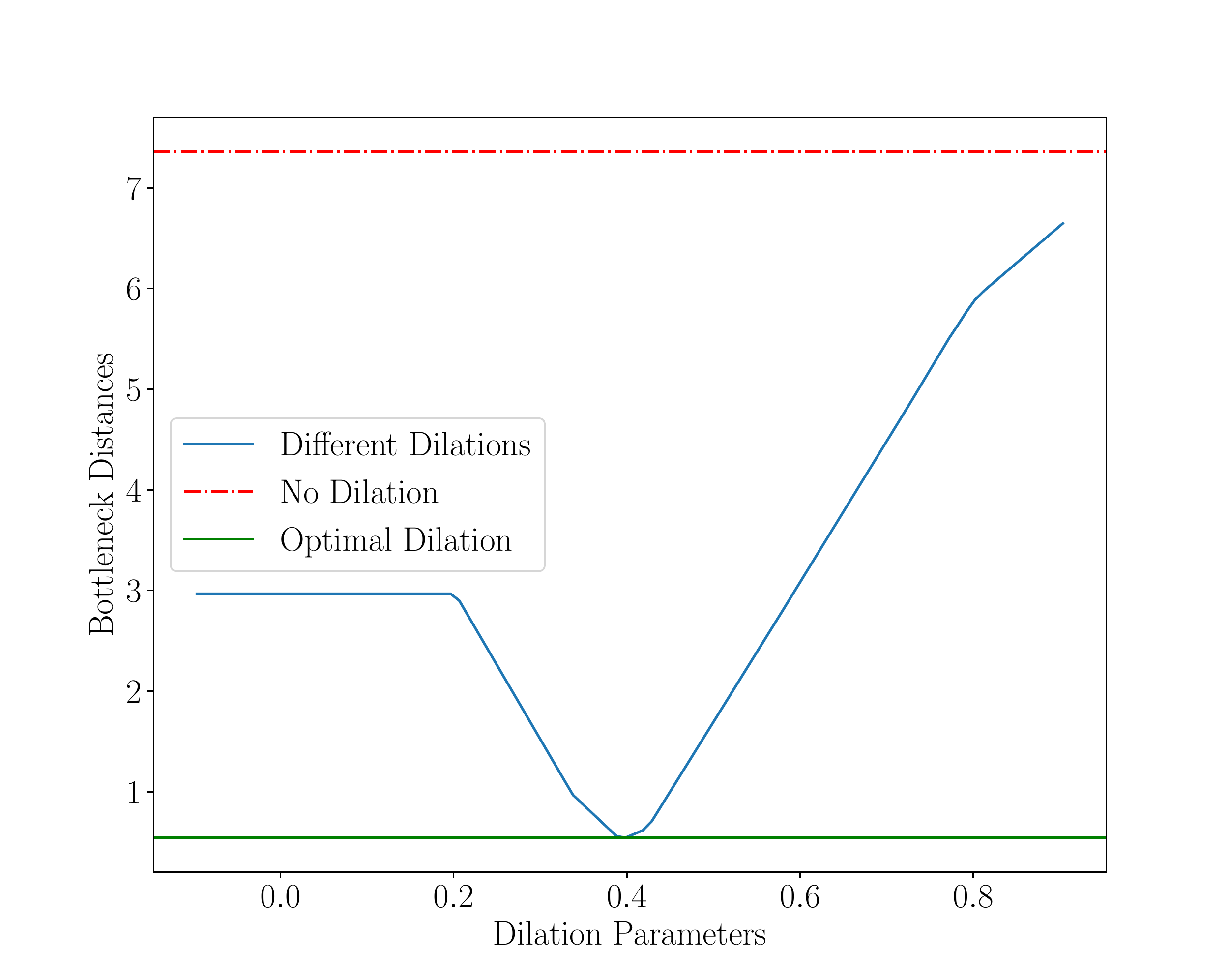}}
\subfigure[M Poincar\'{e}]{\includegraphics[width=0.24\linewidth]{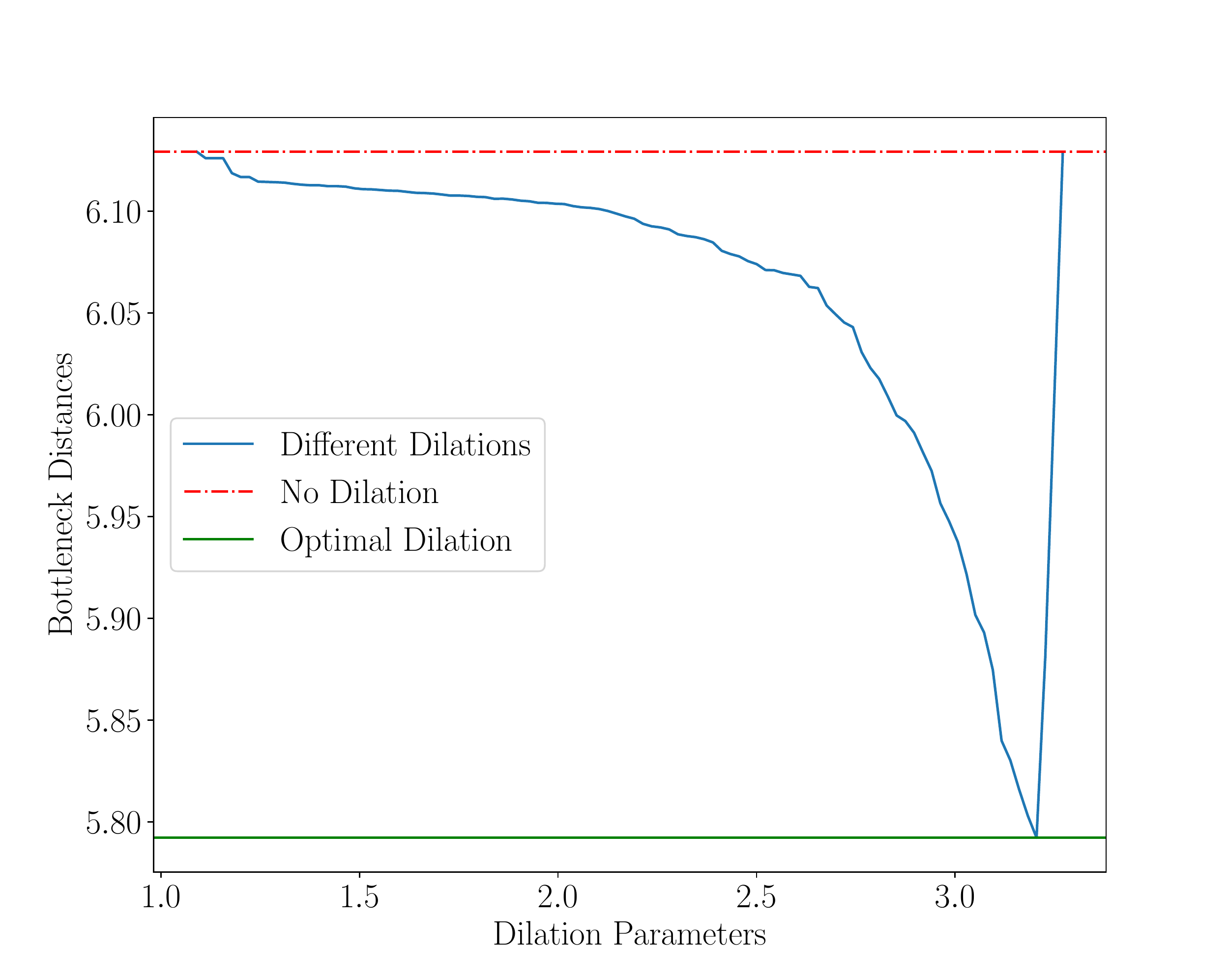}}
\subfigure[AN Euclidean]{\includegraphics[width=0.24\linewidth]{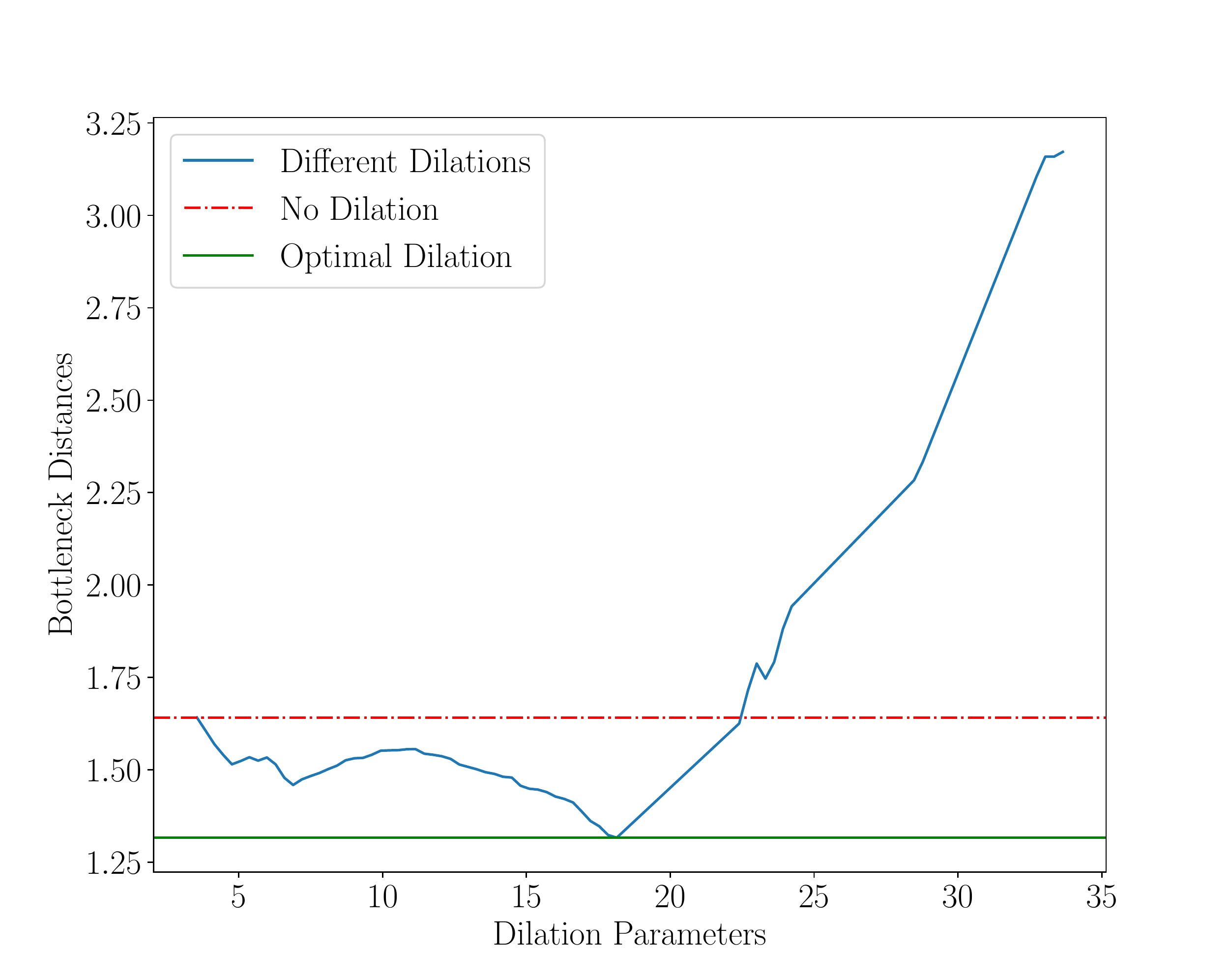}}
\subfigure[M Euclidean]{\includegraphics[width=0.24\linewidth]{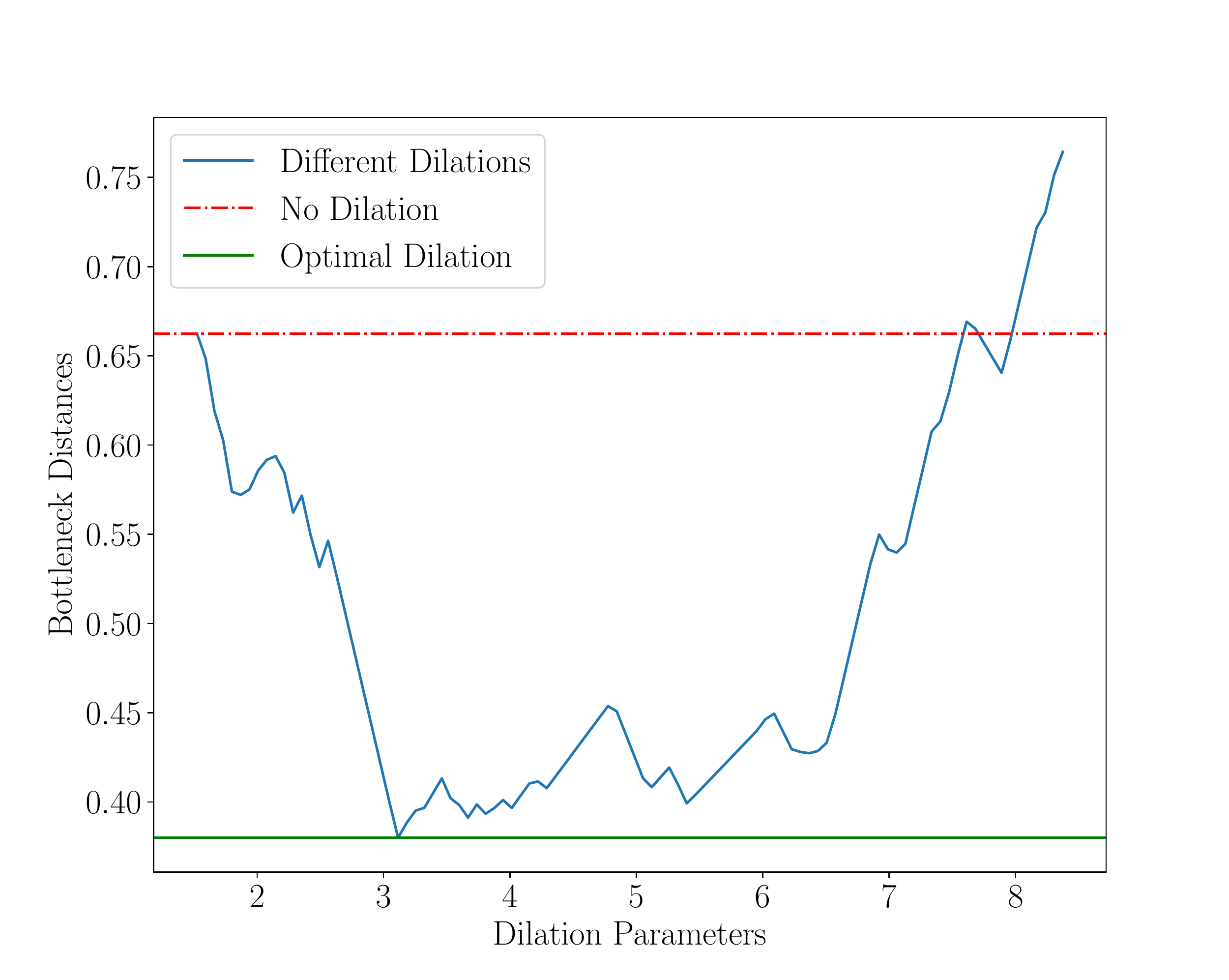}}
\caption{Dilation-Invariant Bottleneck Distances Between Embeddings.  AN: ActivityNet, M: Mammals}
\label{fig:bottleneck}
\end{figure*}

We assume that $\| x-y \|$ has a strictly positive lower bound and study the case when both points $x,y$ lie near the boundary, i.e., both $\| x\| \approx 1$ and $\|y \| \approx 1$.  Notice, then, that (\ref{eq:poincare_H}) and (\ref{eq:poincare_dist2}) are approximated by
\begin{align}
A(x,y) & \approx \frac{1}{2}\frac{\|x-y\|^2}{(1-\|x\|)(1-\|y\|)}, \nonumber\\
d_P(x,y) & \approx \log(2A(x,y)) \nonumber\\
& \approx2\log(\|x-y\|)-\log(1-\|x\|)-\log(1-\|y\|). \label{eq:poincare_dist3}
\end{align}
Notice also that
\begin{equation}
\label{eq:poincare_dist0}
d_P(x,0) \approx \log2 -\log(1-\|x\|). 
\end{equation}
Combining (\ref{eq:poincare_dist3}) and (\ref{eq:poincare_dist0}) yields the following approximation for the Poincar\'{e} distance:
\begin{equation}
\label{eq:poincare_dist4}
d_P(x,y)\approx d_P(x,0)+d_P(y,0)+2\log(\|x-y\|/2).
\end{equation}
For points near the boundary, i.e., $d_P(x,0)=R\gg 1$, (\ref{eq:poincare_dist4}) gives a mutual distance of approximately $2R$ between any given pair of points. 

Recall from Definition \ref{def:VRcomplex} that a $k$-simplex is spanned if and only if the mutual distances between $k+1$ points are less than a given threshold $r$.  The implication for discrete points near the boundary of the Poincar\'{e} ball is that either the threshold is less than $2R$, and then the VR complex is a set of 0-simplices; or the threshold is greater than $2R$, and then the VR complex is homeomorphic to a simplex in higher dimensions.  That is, the VR filtration jumps from discrete points to a highly connected complex; this behavior is illustrated in Figure \ref{fig:poincare_ex}.

\begin{figure*}[ht]
\vskip 0.2in
\begin{center}
\centerline{\includegraphics[width=\columnwidth]{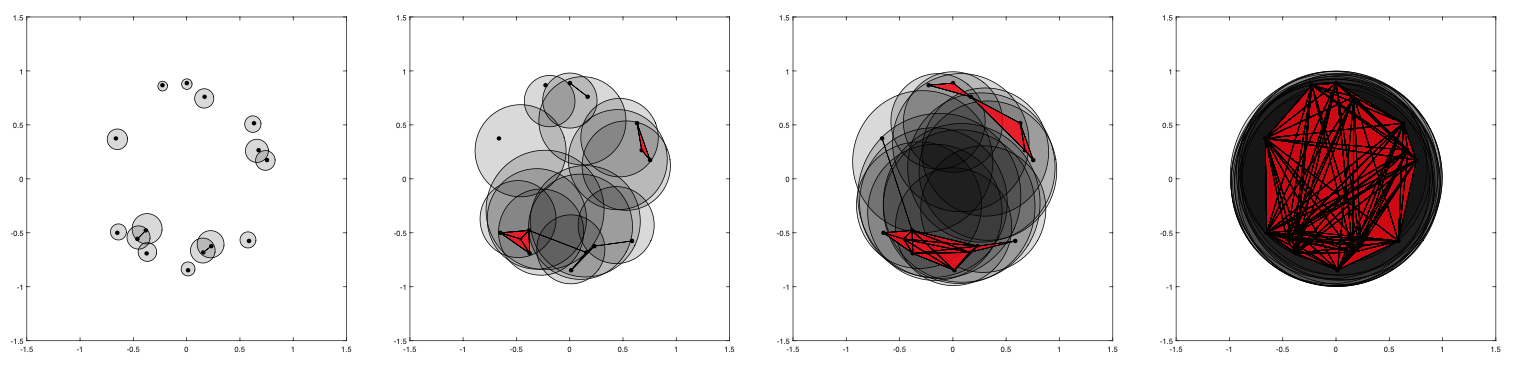}}
\caption{Illustration of connectivity behavior in a VR complex on the Poincar\'{e} ball equipped with the Poincar\'{e} distance.  For points sampled near the boundary, the connectivity of the VR complex grows slowly at low threshold values (small radii) and quickly becomes highly connected as the threshold grows and we approach the approximation (\ref{eq:poincare_dist4}) for distances between points.}
\label{fig:poincare_ex}
\end{center}
\vskip -0.2in
\end{figure*}

\begin{figure}[ht]
\begin{center}
\subfigure[Empirical CDF]{\includegraphics[width = 0.45\linewidth]{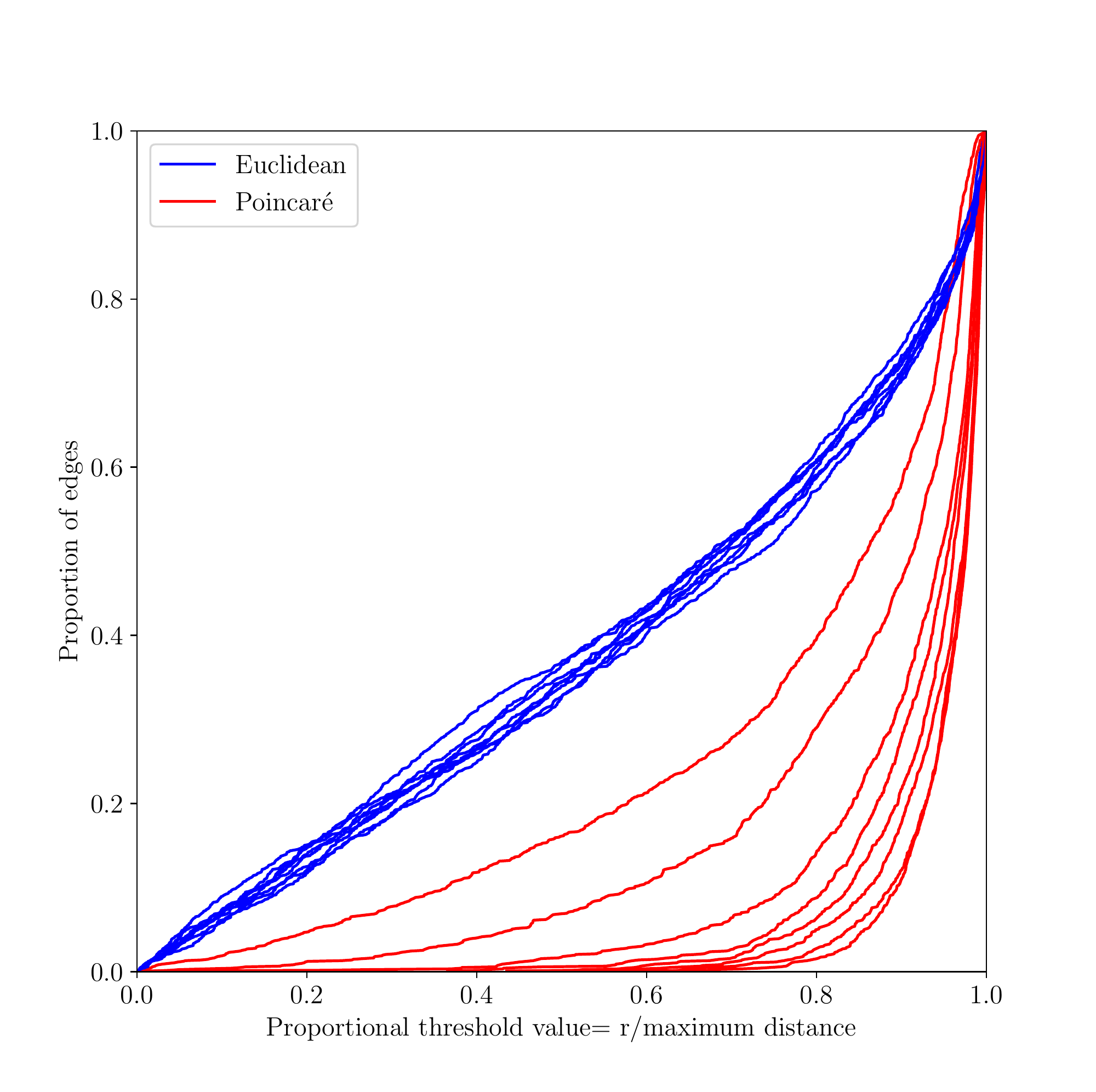}}
\subfigure[Population CDF]{\includegraphics[width=0.45\linewidth]{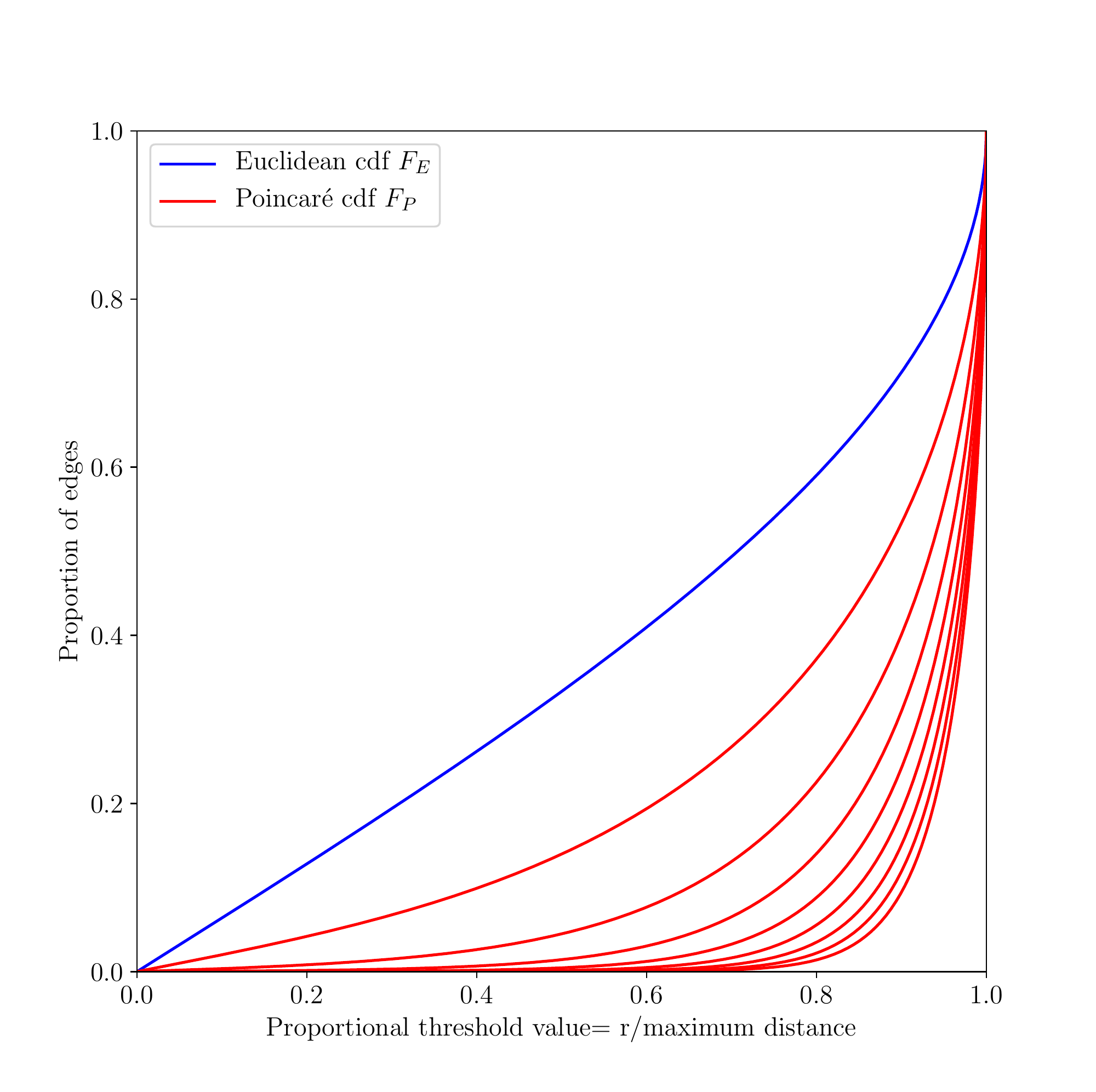}}
\caption{Cumulative distribution functions (cdf) of edges in a VR complex with respect to Poincar\'{e} and Euclidean distances. (a) Proportion of edges in a VR filtration with respect to the Poincar\'{e} distance on the Poincar\'{e} ball (blue lines) and the Euclidean distance on Euclidean space (red lines), computed (\ref{eq:percentage_edges}), against scaled VR threshold values Definition \ref{def:VRcomplex}. (b) Population cdf as derived in Theorems \ref{thm:cdf_euclidean} and \ref{thm:cdf_poincare}.  }
\label{fig:poincare_dist}
\end{center}
\vskip -0.2in
\end{figure}

This phenomenon of sharp phase transition is observable in our applications, which we further explore through numerical experiments and display in Figure \ref{fig:poincare_dist}.  Here, each curve corresponds to a numerical experiment.  We also plot the corresponding connectivity of VR complexes with respect to the Euclidean distance for comparison.  For a given experiment, we draw the distance cumulative distribution function as follows: first, we normalize distances and sort them in ascending order; next, for the $k$th element $R_k$, we compute the proportion of connected pairs as
\begin{equation}
\label{eq:percentage_edges}
\frac{\#\{(x_i,x_j) \mid d_P(x_i,x_j)\le R_k\}}{\#\{(x_i,x_j)\}}.
\end{equation}
From the cumulative distribution plot in Figure \ref{fig:poincare_dist}, we see this rapidly increasing connectivity behavior at higher threshold values in repeated numerical experiments.  In each experiment, points are sampled closer and closer to the boundary of the ball, and we see a progressively sharper jump in connectivity of the VR complex the closer to the boundary the points lie for the Poincar\'{e} distance.  For the extreme case with points sampled closest to the boundary of the ball, when the VR threshold value is at $90\%$ of the maximum distance, there are less than $10\%$ pairs of points that are connected.  This percentage sharply and dramatically increases at thresholds from 90\% to the maximum distance, from $10\%$ to $100\%$ of pairs of points being connected.  The phase change explains why we cannot see long persistence (in terms of points appearing far away from the diagonal) in dimension 1 or 2 in Figure \ref{fig:pers_poinc}.  The connectivity of the VR complex under the Euclidean distance, in contrast, grows linearly with the threshold.  Moreover, we see that the connectivity is stable and robust to the sampling location of the points since there is very little variability in the curves; the linear growth in connectivity is unaffected by the location of the sampled points.  We now formalize these observations.

\paragraph{Statistical Properties of VR Edges on the Poincar\'{e} Ball.}

Since edges in a VR complex are determined by distances between vertices, we are interested in studying the distributions of the distances measured by the Euclidean (standard $L_2$ distance) $d_E$ and Poincar\'{e} $d_P$ metrics on the ball.  Let $\mathbb{S}^1(r)$ be a circle of radius $r$ (measured in Euclidean distance); then any point on $\mathbb{S}^1(r)$ takes the form of $re^{i2\pi\theta}$.  Let $\theta_X$ and $\theta_Y$ be two independent random variables uniformly distributed on $\mathbb{R}/\mathbb{Z} = [0,1)$; and set $X := re^{i2\pi\vartheta_X}$ and $Y := re^{i2\pi\theta_Y}$ so that $X$ and $Y$ are $\mathbb{S}^1(r)$-valued random variables.  We now characterize the distributions of $d_E(X,Y)$ and $d_P(X,Y)$.

\begin{lemma}
Let $\theta_X, \theta_Y \sim \mathrm{Unif}(\mathbb{R}/\mathbb{Z})$.  Then $Z := \theta_Y - \theta_X \sim \mathrm{Unif}(\mathbb{R}/\mathbb{Z})$.
\end{lemma}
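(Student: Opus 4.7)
The plan is to exploit the translation invariance of the uniform (Haar) measure on the circle group $\mathbb{R}/\mathbb{Z}$. I would condition on the value of $\theta_X$ and reduce the problem to verifying that a translate of a uniform random variable on the circle is itself uniform.

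First I would establish the auxiliary fact that for any fixed $a \in \mathbb{R}/\mathbb{Z}$, if $U \sim \mathrm{Unif}(\mathbb{R}/\mathbb{Z})$, then $U - a \pmod 1 \sim \mathrm{Unif}(\mathbb{R}/\mathbb{Z})$. This is a direct change-of-variables: the map $u \mapsto u - a \pmod 1$ partitions $[0,1)$ into the two subintervals $[0,a)$ and $[a,1)$, each of which is shifted by an amount preserving Lebesgue measure, so the pushforward of Lebesgue measure under this map is again Lebesgue measure on $[0,1)$.

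Next, using the independence of $\theta_X$ and $\theta_Y$ together with Fubini's theorem, for any Borel set $A \subseteq [0,1)$ I would write
$$
\Pr(Z \in A) = \int_0^1 \Pr\!\bigl(\theta_Y - x \in A \pmod 1\bigr) \, dx = \int_0^1 |A| \, dx = |A|,
$$
where the second equality invokes the auxiliary fact with $a = x$. This is precisely the defining property of the uniform distribution on $\mathbb{R}/\mathbb{Z}$. An alternative and equally clean route would use Fourier coefficients on the circle: the Fourier coefficients $\mathbb{E}[e^{2\pi i n \theta}]$ of $\mathrm{Unif}(\mathbb{R}/\mathbb{Z})$ vanish at every nonzero integer $n$, and by independence the Fourier coefficients of $Z$ factor as $\mathbb{E}[e^{2\pi i n \theta_Y}] \cdot \mathbb{E}[e^{-2\pi i n \theta_X}]$, so they also vanish for $n \neq 0$, forcing $Z$ to be uniform by Fourier uniqueness.

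The main obstacle is essentially bookkeeping for the modular arithmetic: one has to carefully distinguish the set-theoretic subtraction in $\mathbb{R}$ from the group operation in $\mathbb{R}/\mathbb{Z}$, and ensure that the event ``$\theta_Y - x \in A \pmod 1$'' refers to the image in $\mathbb{R}/\mathbb{Z}$ rather than in $\mathbb{R}$. Once the translation invariance of Haar measure on $\mathbb{R}/\mathbb{Z}$ is in hand, the conclusion follows immediately.
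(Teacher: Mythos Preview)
Your proof is correct, but it proceeds by a genuinely different route from the paper's. The paper computes the cdf of $Z$ directly: it lifts the event $\{Z \le a\}$ in $\mathbb{R}/\mathbb{Z}$ to the union $\bigcup_{n\in\mathbb{Z}}\{n \le \theta_Y - \theta_X \le a+n\}$ in $\mathbb{R}$, observes that only $n=0$ and $n=-1$ contribute (since $\theta_Y-\theta_X \in (-1,1)$), and then evaluates the two resulting probabilities as areas of triangular regions in the unit square, obtaining $\tfrac12 - \tfrac12(1-a)^2 + \tfrac12 a^2 = a$. Your argument instead invokes the translation invariance of the Haar measure on the circle: you condition on $\theta_X$, reduce to the fact that a shift of a uniform variable on $\mathbb{R}/\mathbb{Z}$ is still uniform, and integrate out. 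The paper's computation is entirely elementary and self-contained---just Euclidean geometry in $[0,1]^2$---whereas your approach is more conceptual, avoids any explicit area calculation, and immediately generalizes (the same argument shows that the difference of two independent Haar-distributed elements of any compact group is again Haar-distributed). Your Fourier alternative is likewise valid and buys the same generality.
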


\begin{proof}
For any $a \in [0,1]$, observe that
\begin{align*}
\mathbb{P}(Z \leq a) & = \sum_{n \in \mathbb{Z}} \mathbb{P}(n \leq \theta_Y - \theta_X \leq a + n)\\
& = \mathbb{P}(0 \leq \theta_Y - \theta_X \leq a) + \mathbb{P}(-1 \leq \theta_Y - \theta_X \leq a - 1)\\
& = \frac{1}{2} - \frac{1}{2}(1-a)^2 + \frac{1}{2}a^2 = a,
\end{align*}
as desired.
\end{proof}

The cumulative distribution function (cdf) of the Euclidean distance between $X$ and $Y$ can be explicitly derived as follows.

\begin{theorem}
\label{thm:cdf_euclidean}
Let $F_E :[0,1] \rightarrow [0,1]$ denote the cdf of the normalized Euclidean distance $\frac{1}{2r}d_e(X,Y)$ between $X$ and $Y$.  Then for all $t \in [0,1]$,
$$
F_e(t) = \frac{2}{\pi}\sin^{-1}(t).
$$
\end{theorem}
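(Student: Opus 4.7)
The plan is to parameterize the problem using the angle difference and then invoke the preceding lemma to reduce it to a one-variable calculation. Writing $X = re^{i 2\pi \theta_X}$ and $Y = re^{i 2\pi \theta_Y}$, a direct computation with the double-angle identity gives
\begin{equation*}
d_E(X,Y) = r \bigl| e^{i2\pi\theta_X} - e^{i2\pi\theta_Y} \bigr| = 2r \bigl|\sin(\pi(\theta_Y - \theta_X))\bigr|,
\end{equation*}
so that the normalized distance equals $\tfrac{1}{2r}d_E(X,Y) = |\sin(\pi Z)|$ where $Z := \theta_Y - \theta_X$.

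By the lemma immediately preceding the theorem, $Z \sim \mathrm{Unif}(\mathbb{R}/\mathbb{Z})$, so I may represent $Z$ as a uniform random variable on $[0,1)$. Since $\sin(\pi z) \geq 0$ for $z \in [0,1]$, the absolute value is unnecessary, and I need only compute $\mathbb{P}(\sin(\pi Z) \leq t)$ for $t \in [0,1]$. Using the symmetry of $\sin(\pi z)$ about $z = 1/2$ on $[0,1]$, the sub-level set $\{z \in [0,1): \sin(\pi z) \leq t\}$ decomposes as $[0, \tfrac{1}{\pi}\sin^{-1}(t)] \cup [1 - \tfrac{1}{\pi}\sin^{-1}(t),\, 1)$, whose total Lebesgue measure is $\tfrac{2}{\pi}\sin^{-1}(t)$.

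Since $Z$ is uniform, the probability equals this measure, yielding $F_E(t) = \tfrac{2}{\pi}\sin^{-1}(t)$ as claimed. There is no significant obstacle here: the only nontrivial ingredient is the reduction to a single uniform variable via the lemma, and the rest is an elementary trigonometric identity together with a symmetry argument on the sub-level set of $\sin(\pi z)$.
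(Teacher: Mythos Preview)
Your proof is correct and follows essentially the same route as the paper: both reduce to $\tfrac{1}{2r}d_E(X,Y)=\sin(\pi Z)$ with $Z$ uniform on $[0,1)$ by the preceding lemma, and then compute $\mathbb{P}(\sin(\pi Z)\le t)$. You are in fact slightly more explicit than the paper in spelling out the sub-level set decomposition, whereas the paper simply asserts the final value.
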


\begin{proof}
First notice that
\begin{align}
d_E(X,Y) & = r \sqrt{(\cos(2\pi\theta_X) - \cos(2\pi\theta_Y))^2 + (\sin(2\pi\theta_X) - \sin(2\pi\theta_Y))^2} \nonumber\\
& = r \sqrt{2 - 2 \cos(2\pi(\theta_Y - \theta_Z))} \nonumber\\
 & = 2r\sin(\pi Z). \label{eq:cdf_eq_l2}
\end{align}
Then we have
\begin{equation}
\label{eq:sinpiz}
F_e(t) = \mathbb{P}\bigg( \frac{1}{2r}d_E \leq t \bigg) = \mathbb{P}(\sin(\pi Z) \leq t) = \frac{2}{\pi}\sin^{-1}(t).
\end{equation}
\end{proof}

Notice that the cdf does not depend on $r$, which is consistent with what we observe in numerical experiments given in Figure \ref{fig:poincare_dist}.  We now derive the cdf for Poincar\'{e} distance $d_P$ between $X$ and $Y$.

\begin{theorem}\label{thm:cdf_poincare}
Let $M := \max \{d_P(X,Y)\}$ and let $F_P : [0,1] \rightarrow [0,1]$ be the cdf of the normalized Poincar\'{e} distance $\frac{1}{M}d_P(X,Y)$.  Then for all $t \in [0,1]$,
$$
F_P(t) = \frac{2}{\pi}\sin^{-1}\bigg( \frac{1}{2} \sinh \bigg( \frac{tM}{2} \bigg) \bigg( \frac{1}{r} - r \bigg) \bigg).
$$
\end{theorem}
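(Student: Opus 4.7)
The plan is to reduce the statement to the cdf of $\sin(\pi Z)$ computed in the proof of Theorem~\ref{thm:cdf_euclidean}, using the uniform distribution of $Z = \theta_Y - \theta_X$ on $\mathbb{R}/\mathbb{Z}$ established in the preceding lemma. First I would substitute $\|X\| = \|Y\| = r$ and $\|X - Y\|^2 = 4r^2 \sin^2(\pi Z)$, the latter from equation~(\ref{eq:cdf_eq_l2}), into the Poincar\'e distance formula~(\ref{eq:poincare_dist1}), obtaining
\[
d_P(X,Y) = \cosh^{-1}\!\left(1 + \frac{8r^2 \sin^2(\pi Z)}{(1-r^2)^2}\right).
\]

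Next, I would invoke the half-angle identity $\cosh^{-1}(1 + 2u) = 2\sinh^{-1}(\sqrt{u})$ for $u \ge 0$, which follows from $\cosh(\theta) = 1 + 2\sinh^2(\theta/2)$. Applying this with $u = 4r^2 \sin^2(\pi Z)/(1-r^2)^2$ and using $\sin(\pi Z) \ge 0$ for $Z \in [0,1)$ to resolve the square root gives $d_P(X,Y) = 2\sinh^{-1}\!\left(\tfrac{2r \sin(\pi Z)}{1-r^2}\right)$, which inverts to
\[
\sin(\pi Z) \;=\; \tfrac{1}{2}\!\left(\tfrac{1}{r} - r\right)\sinh\!\left(\tfrac{d_P(X,Y)}{2}\right).
\]
At $Z = 1/2$, where $d_P$ attains its maximum $M$, the right-hand side equals $1$; this both determines $M$ explicitly via $M = 2\sinh^{-1}(2r/(1-r^2))$ and confirms that the argument of $\sin^{-1}$ in the claimed formula lies in $[0,1]$ for all $t \in [0,1]$.

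Finally, since $\sinh$ is strictly increasing on $[0,\infty)$, I would write
\[
F_P(t) \;=\; \mathbb{P}\!\left(d_P(X,Y) \le tM\right) \;=\; \mathbb{P}\!\left(\sin(\pi Z) \le \tfrac{1}{2}\!\left(\tfrac{1}{r} - r\right)\sinh(tM/2)\right),
\]
and appeal to the identity $\mathbb{P}(\sin(\pi Z) \le s) = \tfrac{2}{\pi}\sin^{-1}(s)$ already established in equation~(\ref{eq:sinpiz}) to read off the claimed expression. The only real obstacle is spotting the half-angle simplification from $\cosh^{-1}$ to $\sinh^{-1}$; once that identity is in hand, the remainder of the argument is routine algebraic rearrangement combined with the Euclidean cdf calculation from the previous theorem.
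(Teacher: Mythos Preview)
Your proposal is correct and follows essentially the same approach as the paper: both substitute the Euclidean distance into the Poincar\'e formula, convert $\cosh^{-1}(1+2u)$ to $2\sinh^{-1}(\sqrt{u})$, and then reduce to the identity $\mathbb{P}(\sin(\pi Z)\le s)=\tfrac{2}{\pi}\sin^{-1}(s)$ from the previous theorem. Your version is in fact slightly more careful than the paper's, since you explicitly justify the half-angle identity, the sign of $\sin(\pi Z)$, and the fact that the argument of $\sin^{-1}$ stays in $[0,1]$.
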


\begin{proof}
We have
$$
d_P(X,Y) = \cosh^{-1}\bigg( 1 + \frac{2d_E(X,Y)^2}{(1-r^2)^2} \bigg) = \cosh^{-1}\bigg( 1 + \frac{8r^2\sin^2(\pi Z)}{(1-r^2)^2} \bigg) = 2\sinh^{-1}\bigg( \frac{2\sin(\pi Z)}{\frac{1}{r} - r} \bigg),
$$
by substituting (\ref{eq:cdf_eq_l2}) above.  We then have that $\displaystyle M = 2\sinh^{-1}\bigg( \frac{2}{\frac{1}{r} -r} \bigg)$ and compute, using (\ref{eq:sinpiz}):
\begin{align*}
F_P(t) & = \mathbb{P}(d_P(X,Y) \leq tM)\\
& = \mathbb{P}\bigg(d_E(X,Y)^2\le\frac{(1-r^2)^2(\cosh(tM)-1)}{2}\bigg)\\
& = \mathbb{P}\bigg(\sin(\pi Z) \leq \frac{1}{2}\sinh\bigg( \frac{tM}{2} \bigg)\bigg( \frac{1}{r} - r \bigg) \bigg)\\
& = \frac{2}{\pi}\sin^{-1}\bigg( \frac{1}{2} \sinh \bigg( \frac{tM}{2} \bigg) \bigg( \frac{1}{r} - r \bigg) \bigg).
\end{align*}
\end{proof}

Notice that contrary to the Euclidean case given by Theorem \ref{thm:cdf_euclidean} above, here, the cdf depends on the radius $r$.  More precisely, let $r_n$ be a sequence of radius values converging to 1 and here, let $R_n := \frac{1}{M_n} d_P(X_n, Y_n)$, where $M_n, X_n, Y_n$ are the corresponding $M, X, Y$ values associated with the radius $r_n$; let $F_P^{(n)}$ be the cdf of each $R_n$.  Then
$$
F_P^{(n)}(t) \rightarrow F^{\infty}(t) = \begin{cases}
1, & t = 1;\\
0, & 0 \leq t < 1.
\end{cases}
$$
In other words, we have that $R_n$ converges in distribution to the Dirac delta $\delta_1$.\\



Note that although the topology of the databases is not captured in the Poincar\'{e} ball embeddings, the dilation-invariant bottleneck dissimilarity for the ActivityNet embedding is nevertheless small, with a value of $0.5481$ and corresponding dilation parameter value of $3.986$.  We do not see this coincidence for the Mammals dataset, however: the dilation-invariant bottleneck dissimilarities between the persistence diagram arising from the exponential map embedding and the direct Poincar\'{e} embedding remains large at $5.808$ with a corresponding dilation parameter value of $3.182$.  This large dilation-invariant bottleneck dissimilarity is likely due to the difference in occurrences of homology between the two diagrams: the exponential map embedding picks up on spurious $H_1$ and $H_2$ homology, while the direct Poincar\'{e} embedding does not.  There is no immediate interpretation for the dilation parameter value and its effect on the dilation-invariant bottleneck dissimilarities, due to the nonlinear and asymmetric behavior of the comparison measure and derived approximations (see Figure \ref{fig:poincare_ex}).

Overall, we find that the higher-order connectivity and hierarchy of the databases was preserved in the Euclidean embeddings, but not in the Poincar\'{e} ball embeddings: $H_1$ and $H_2$ homology corresponding to cycles and voids appear in the persistence diagrams of the Euclidean embeddings, and do not in the Poincar\'{e} ball embeddings.

\subsection{Symmetry versus Asymmetry in Practice}
\label{sec:sym}	

Here, we give a demonstration with real data on the difference between assuming a symmetric versus asymmetric dilation-invariant bottleneck dissimilarity as discussed previously in Section \ref{sec:weak_isom}. We compute both the asymmetric \eqref{eq:DI-dissimilarity} and symmetric \eqref{eq:DI-symmetric} dilation-invariant bottleneck dissimilarities for the persistence diagrams of the ActivityNet and Mammals datasets to the Euclidean embedding with $L_2$ metric of the Mammals dataset (which we denote as M $L_2$); the results are given in Table \ref{tab:sym-vs-asym}.
  
  \setlength{\extrarowheight}{3pt}
  \begin{table}[htbp]
      \centering
      \begin{tabular}{|c|c|c|c|c|c|c|c|}
      \hline
          & AN Cosine & AN $L_2$ & AN+expmap & AN Poincar\'{e} & M Cosine & M+expmap & M Poincar\'{e}\\
          \hline
          $\overline{d_D}$ & 0.38 & 0.45 & 0.57 & 0.49 & 0.39 & \textbf{0.21} & \textbf{0.44} \\
          \hline
          $\overline{d_{\mathrm{sym}}}$ & \textbf{0.23} & 0.93 & 3.02 & 1.24 & \textbf{0.23} & 0.61 & \textbf{3.27}\\
          \hline
      \end{tabular}
      \caption{Symmetric and asymmetric dilation-invariant bottleneck dissimilarities between persistence diagrams of all embeddings to the persistence diagram of the Euclidean with $L_2$ distance embedding of the Mammals dataset.}
      \label{tab:sym-vs-asym}
  \end{table}
  
Using the asymmetric dilation-invariant bottleneck dissimilarity, the most similar persistence diagram to that of the M $L_2$ embedding is the persistence diagram of the M+expmap embedding with a value of 0.21. It also finds that persistence diagrams of other embeddings of the Mammals dataset are close as well (0.39 and 0.44).
  
Using the symmetric dilation-invariant bottleneck dissimilarity, the most similar persistence diagrams to that of the M $L_2$ embedding are the persistence diagrams of the AN Cosine and M Cosine embedding (0.23). However, the persistence diagram of the M Poincar\'{e} dataset has a comparatively large dissimilarity (3.27). This is because $\overline{d_{D}}(A, B)$ depends on the scale of the second variable (persistence diagram) $B$. When comparing dissimilarities, it only makes sense if everything is compared under the same scale (i.e., fix $B$). If we symmetrize, the scales of $A$ and $B$ mix up and the result becomes uninterpretable. In the above example, the persistence diagram of the M Poincar\'{e} embedding has much larger scale than the persistence diagram of the M $L_2$ embedding. Though these two datasets have similar topology as shown using the asymmetric dilation-invariant bottleneck dissimilarity, the symmetric version is not able to identify this resemblance.

\subsection{Topological Information Retrieval: A Classification Problem}
\label{sec:classification}


We now turn to a problem of inference: specifically, we perform an IR task at the database level taking into account the inherent topology of the database.  For this task, we use seven medical imaging databases from MedMNIST \citep{medmnistv1}; summaries of these datasets are given in Table \ref{tab:medmnist}.  The task is the following: Given a set of points $N$, we seek the closest set of points $M$ where $N \ll M$ such that the two sets comprise representatives from the same category.  Essentially, this problem may be approached as a problem of topological classification.  Persistent homology has been successfully used in various classification problems and settings, including trajectory classification in robotics \citep{pokorny2014multiscale}, determination of parameters of dynamical systems \citep{adams2017persistence}, and sleep-wake classification in physiology \citep{10.3389/fphys.2021.637684}.  In particular, it has been used in medical imaging classification for endoscopy images \citep{DUNAEVA201613}, prostate cancer histopathology images \citep{prostate}, and liver biopsy tissue images \citep{TERAMOTO2020105614}.

\begin{table}[]
\centering
\begin{tabular}{|c|c|c|c|c|c|c|c|}
\hline
Dataset & Derma & Pneumonia & Retina & Breast & Organ\_Axial & Organ\_Coronal & Organ\_Sagittal \\ \hline
Classes & 7      & 2         & 5      & 2      & 11           & 11             & 11              \\ \hline
Samples & 10,015 & 5,856     & 1600   & 780    & 58,850       & 23,660         & 25,221          \\ \hline
\end{tabular}
\caption{Statistics of MedMnist datasets\label{tab:medmnist}}
\end{table}


Using an autoencoder based upon \cite{ResNetAE}, we embedded each dataset separately in Euclidean space.  We note in particular that the networks do not provide explicit information regarding the class of each input image.  Since the number of embedding points exceeded the reasonable computational time frame threshold for VR persistent homology (see Table \ref{tab:medmnist}), we obtained persistence diagram representatives of embeddings of each class as follows: we split the embeddings on non-overlapping sets of $N=\{500,600\}$ and computed the class-specific persistence diagram by the multiple subsampling scheme described by \cite{chazal2015subsampling}. Specifically, for a large dataset $X$ with $M$ points, we sample $m\ll M$ points randomly $B$-many times. The procedure yields $B$ point sets, $X_1,\cdots, X_B$, which then give the persistence diagrams $\mathrm{Dgm}(X_1),\cdots,\mathrm{Dgm}(X_B)$. The {\em Fr\'{e}chet mean} (also referred to as the Lagrangian barycenter \citep{lacombe2018large}) is given by 
$$
    \overline{\mathrm{Dgm}} = \mathop{\arg\min}_{Z\in\mathcal{D}}\frac{1}{B}\sum_{i=1}^Bd_2(Z,\, \mathrm{Dgm}(X_i))^2
$$
where $d_2$ is the 2-Wasserstein distance between persistence diagrams, which we compute following \cite{turner2014frechet}. In similar spirit to \cite{chazal2015subsampling}, we take the Fr\'{e}chet mean $\overline{\mathrm{Dgm}}$ to approximate $\mathrm{Dgm}(X)$. Recent results by \citet{cao2022approximating} give convergence results and show that the Fr\'{e}chet mean of persistence diagrams computed from subsampled datasets gives a valid approximation of the true persistence diagram of the larger dataset.

During inference, we randomly sample a subset of proportion $\{0.2,0.4,0.6,0.8\}$ of the class, compute VR persistence, and compare to the class-specific persistence diagrams using our  dilation-invariant bottleneck dissimilarity. The class of the most similar persistence diagram (i.e., the one with the smallest dilation-invariant bottleneck dissimilarity) is assigned to the query subset.

In Table \ref{tab:ir_results}, we show the classification results where we quote the top two accuracy of our predictions, i.e., whether or not the correct class is in the first or second predicted position based on the ordered similarity.  We note that with our the dilation-invariant bottleneck dissimilarity, we are able to predict the correct class with an accuracy of up to $87\%$. As expected, the more points we have in our query, the better the accuracy since the query persistence diagram is then a better representative of the true persistence diagram.  We further notice a slight drop in performance in some datasets for the $60\%$ proportion, which we hypothesize is due to suboptimal  sampling from the embeddings.  Notice here that since we are querying by persistence diagrams, we are effectively querying by sets, so there is no appropriate comparative measure in terms of other methods; the most appropriate comparison for our performance is the same IR task but using the standard bottleneck distance.  We find that our method matches or outperforms the standard bottleneck distance in the same IR task, especially in datasets with more classes when using less data to compute the persistence diagram.




\section{Discussion}
\label{sec:discussion}

In this work, we used persistent homology to quantify and study the topological structure of databases corresponding to their hierarchy and connectivity as an exploratory analysis.  We found that the Euclidean embeddings preserve the topology of the databases with respect to both the $L_2$ and cosine similarity metrics, while the Poincar\'{e} ball embeddings did not.  Moreover, we found that among topology-preserving embeddings, the topological structure captured by the persistence diagrams was consistent.  To quantify this effect, we introduced the dilation-invariant bottleneck dissimilarity and distance, which retains the resemblance between persistence diagrams and erases the distortion effect of the different metrics over the embeddings.  We found that dilation-invariant bottleneck distances between topology-preserving database representations are small in our exploratory analysis.  We further performed an IR task of classification using the dilation-invariant bottleneck distance and found a high level of accuracy of classification.  Our work is, to the best of our knowledge, the first application of TDA in IR.

Note that in our exploratory analysis and in studying the topology of the database representations, the goal of our work was to evaluate how different metric spaces maintain the topology of databases without any prior incentive to do so.  This is in contrast to previous work where TDA has been applied to hierarchical database representations and compactifications, such as work by \cite{aloni2021joint} who impose geometric and topological structures and work by \cite{Moor2020} who explicitly enforce a topological constraint.  Our motivation for not imposing any topological structure is with task of IR in mind: while ``bad" databases may need their topology to be enforced, for ``good" databases, this is not necessary.  In this work, we do not seek to improve the topological quality of the database representations.  Our results presented here are a first step in assessing the utility of TDA as a viable tool in IR, which was indeed demonstrated in Section \ref{sec:classification}.  Our findings show that without enforcing a topological prior, the topology is not preserved between two commonly-used embeddings to a high degree, which we quantify with the dilation-invariant bottleneck distance.  Where the connectivity of the database is important, i.e., when we seek to preserve cycles and higher order topology, database embeddings into the Poincar\'{e} ball are less desirable.  While embeddings into the Poincar\'{e} ball indeed preserve hierarchy and are useful when the databases are known to be trees, when it is {\em a priori} unknown if there exists a higher order structure that would be important for IR, our work shows that a more conservative approach would be to use an alternative embedding.

Our findings motivate the quantification of the added information and potential of increased accuracy in IR tasks when taking into account the topology of the hierarchy.  The initiation toward this direction shown by our inferential analysis where IR was performed by classification is promising.  Fully harnessing the potential of TDA in IR tasks will entail the development of a new set of topological quality control and performance criteria, which will need to be interpretable in context and comparable to currently used methods.  These criteria will need to be applied to both the queries submitted and the database, since queries are not always informative or well-designed, while databases may also suffer from missing or repeated entries and noise.  An extensive set of experiments is required to assess all possible scenarios (i.e., for when queries are ``bad," but the database is ``good," and vice versa; and also when both the query and database are ``bad.").  In future work, we aim to develop a topological criterion for preference of certain database embeddings over others, which may then be used to increase the precision of IR.


\section*{Acknowledgments}

The authors wish to thank Henry Adams, Th\'{e}o Lacombe, and Facundo M\'{e}moli for helpful discussions; we are also grateful to Don Sheehy for providing the code to compute the shift-invariant bottleneck distance. In addition, we would like to thank the NVIDIA Corporation for the donation of the GPUs utilized in this project.

Y.C.~is funded by a President's PhD Scholarship at Imperial College London.  B.K.'s research is supported by the UK Engineering and Physical Sciences Research Council and Innovate UK; B.K.~also receives personal fees and other from ThinkSono Ldt., Ultromics Ldt., and Cydar Medical Ldt., outside the submitted work. A.M.~wishes to acknowledge start-up funding from the Department of Mathematics at Imperial College London. L.S.'s PhD is funded by the UK Engineering and Physical Sciences Research Council under award EP/S013687/1. A.V.'s PhD is funded by the Department of Computing at Imperial College London.

\begin{sidewaystable}[htpb]
\centering

\caption{IR results: Reported Top 1st/Top 2nd prediction accuracy using the dilation-invariant bottleneck dissimilarity (DI) compared to the standard bottleneck distance (BD).  We see that our proposed dilation-invariant bottleneck dissimilarity at least matches the performance of the standard bottleneck distance and outperforms it in instances of smaller proportions of data. 
\label{tab:ir_results}}

\begin{tabular}{|c|c|c|c|c|c|c|c|c|c|}

\hline
 & Distance  & Derma      & Pneumonia & Retina     & Breast  & Organ\_Axial & Organ\_Coronal & Organ\_Sagittal \\ \hline

\multirow{2}{*}{20\%} & DI & 0.7142/0.8571  & \textbf{0.7777}/0.8888  & \textbf{0.7857/0.9285}   & \textbf{0.8125}/0.875 & \textbf{0.7407/0.8888}     & \textbf{0.6578/0.8157}       & \textbf{0.5714}/0.6938        \\ 

& BD   & 0.7142/1.0     & 0.6666/1.0    & 0.7142/0.9285 & 0.75/0.9375  & 0.6296/0.8518     & 0.5526/0.7894       & 0.4897/0.7142        \\ \hline

\multirow{2}{*}{40\%} & DI & 0.5714/0.8571 & 0.6666/\textbf{0.8888}  & \textbf{0.7857/0.9285}   & \textbf{0.8125}/0.875 & \textbf{0.8518/0.9259}     & \textbf{0.7105/0.8421}       & \textbf{0.6122}/0.7346        \\

& BD   & 0.7142/0.8571  & 0.7777/0.8888  & 0.7142/0.9285 & 0.75/0.9375  & 0.7777/0.9259     & 0.6842/0.8421       & 0.5918/0.7551        \\ \hline

\multirow{2}{*}{60\%} & DI & 0.7142/0.8571 & 0.7777/\textbf{1.0}     & 0.7142/\textbf{1.0}     & 0.75/\textbf{1.0}     & 0.7777/\textbf{0.9259}     & 0.6842/\textbf{0.8684}       & 0.59183/\textbf{0.7551}       \\

& BD   & 0.8571/1.0    & 0.8888/1.0     & 0.8571/0.9286 & 0.875/0.9375 & 0.8148/0.9259     & 0.7631/0.8684       & 0.6530/0.7551        \\ \hline

\multirow{2}{*}{80\%} & DI & 0.7142/0.8571  & \textbf{0.7777/1.0}     & 0.7142/0.9286 & 0.75/0.9375 & 0.7037/\textbf{0.9629}     & 0.6315/\textbf{0.8947}       & 0.5510/\textbf{0.7755}        \\

& BD   & 0.8571/1.0    & 0.7777/1.0     & 0.8571/1.0     & 0.875/1.0   & 0.8518/0.92592    & 0.657/0.8157        & 0.5714/0.7142        \\ \hline

\end{tabular}
\end{sidewaystable}
 







\clearpage
\newpage
\bibliographystyle{chicago}  
\bibliography{probe_ref} 


\end{document}